%% file: neurips_2026.tex
\documentclass{article}

 \usepackage[preprint]{neurips_2026}

\usepackage[utf8]{inputenc} 
\usepackage[T1]{fontenc}    
\usepackage{hyperref}       
\usepackage{url}            
\usepackage{booktabs}       
\usepackage{amsfonts}       
\usepackage{nicefrac}       
\usepackage{microtype}      
\usepackage{xcolor}         
\usepackage{amsmath}
\usepackage{enumitem}
\usepackage{graphicx}
\usepackage{multirow}
\usepackage{subcaption}
\usepackage{wrapfig}

\usepackage{silence}
\usepackage{pifont}
\newcommand{\cmark}{\ding{51}}
\newcommand{\xmark}{\ding{55}}

\usepackage{xspace} 
\newcommand{\ours}{\texttt{ConceptM\textsuperscript{3}oE}\xspace}
\newcommand{\oursfull}{\textbf{Concept} \textbf{M}ulti\textbf{m}odal \textbf{MoE}\xspace}

\newcommand{\ItwoMoE}{\texttt{I\textsuperscript{2}MoE}\xspace}

\makeatletter
\NewEnviron{arxivhide}{
  \if@preprint
  \else
    \BODY
  \fi
}
\makeatother

\title{ConceptM\textsuperscript{3}oE: Concept-Guided Multimodal Mixture of Experts for Interpretable Computational Pathology}

\author{%
  \textbf{Xuan Wang\textsuperscript{1,$*$}, Zhongling Xu\textsuperscript{1,$*$}, Gopi Kannedhara\textsuperscript{1}, Joakim Nguyen\textsuperscript{1}, Jian Yu\textsuperscript{1}, Jinrui Fang\textsuperscript{1},} \\
  \textbf{Abdurrahmaan Baghdadi\textsuperscript{1}, Tianlong Chen\textsuperscript{2}, Awais Naeem\textsuperscript{1}, Chandra Krishnan\textsuperscript{3}, Edward Castillo\textsuperscript{1},} \\
  \textbf{Andrew H. Song\textsuperscript{4}, Ankita Shukla\textsuperscript{5}, Ying Ding\textsuperscript{1}, Nicholas Konz\textsuperscript{2,$\dagger$}, Hairong Wang\textsuperscript{1,$\dagger$}} \\
  \vspace{0.2cm} \\
  \textsuperscript{1} University of Texas at Austin, Austin, TX, USA \\
  \textsuperscript{2} University of North Carolina at Chapel Hill, NC, USA \\
  \textsuperscript{3} Dell Children's Medical Center, Austin, TX, USA \\
  \textsuperscript{4} The University of Texas MD Anderson Cancer Center, Houston, TX, USA \\
  \textsuperscript{5} University of Nevada, Reno, NV, USA \\
  \vspace{0.1cm} \\
  \texttt{\{xuan.wang, xuzhl\}@austin.utexas.edu} \\
  \texttt{nickk124@cs.unc.edu, hairong.wang@austin.utexas.edu}
}

\begin{document}

\begingroup
\renewcommand{\thefootnote}{}
\footnotetext{$*$ Equal contribution. $\dagger$ Corresponding authors.}
\endgroup

\maketitle

\begin{abstract}
\vspace{-0.5em}

Healthcare models are transitioning from unimodal prediction toward multimodal reasoning over heterogeneous diagnostic inputs. In computational pathology, for complex tumor subtypes where morphology alone can be challenging to distinguish, pathology reports and molecular measurements may provide additional diagnostic evidence alongside whole-slide images, yet existing models often fail to clarify how diverse signals assemble into recognizable diagnostic concepts.
We propose \ours{} (\oursfull{}), which embeds concept formation directly within interaction-aware mixture-of-experts (MoE) pathways. 
The architecture decomposes evidence into modality-specific, redundant, and synergistic experts, which are then projected into structured concept bottlenecks mapping latent features to a hierarchy of morphology and biomarker concepts. 
To prevent the information loss typical of interpretable bottlenecks, we utilize residual pathways within each expert to allow task-relevant signals to flow both through the concepts and directly to the final task prediction, so that high performance is maintained alongside interpretability. 
Across an institutional pediatric brain tumor cohort and a public glioma cohort, the framework delivers competitive performance to unconstrained models while producing reasoning traces validated by an independent neuropathologist. 
In data-limited regimes, \ours{} improves limited-data performance, increasing macro-F1 from 56.41\% to 66.70\% at small training sizes compared to non-concept-informed baselines, while also showing faster training convergence consistent with the regularizing effect of concept learning. 
This work offers a scalable path toward high-performance medical AI that is inherently verifiable and better aligned with the complex decision-making of clinical practice.

\end{abstract}
\vspace{-0.5em}

\input{intro.tex}

\input{method.tex}

\input{experiment.tex}

\input{conclusion.tex}

\begin{ack}
This work was supported by the NSF AI Institute for Foundations of Machine Learning and partially funded by the National Institutes of Health (NIH) under award 1R01EB03710101. The computational experiments were conducted using the MIATA server at the School of Information at The University of Texas at Austin, as well as the Lonestar6 and Vista GPU clusters provided by the Texas Advanced Computing Center (TACC) and the Center for Generative AI (CGAI) at The University of Texas at Austin. The views and conclusions contained in this document are those of the authors and should not be interpreted as representing the official policies, either expressed or implied, of the NIH or the U.S. Government.
\end{ack}

\bibliographystyle{plain}
\bibliography{conceptm3oe_references}


\begin{arxivhide}
\newpage
\input{checklist.tex}
\end{arxivhide}


\newpage
\input{appendix.tex}


\end{document}

%% file: intro.tex
\vspace{-0.5em}
\section{Introduction}
\vspace{-0.5em}
\label{sec:intro}

Data-driven models have revolutionized diagnostic and prognostic workflows in healthcare, driven by progress in medical image analysis, clinical outcome prediction, biomedical foundation models, and multimodal diagnostic systems \citep{acosta2022multimodal, moor2023foundation, tu2024towards}. A primary objective of these systems is the synthesis of heterogeneous patient data to facilitate reliable clinical decision-making. Computational pathology has emerged as a critical domain for these methodologies, as whole-slide images (WSIs) provide high-resolution tissue-level evidence for cancer diagnosis, prognosis, and biomarker identification \citep{lu2021data, chen2024uni, vorontsov2024virchow, huang2024visual, ding2024multimodal}. However, definitive pathological diagnosis is seldom predicated on image-based patterns alone. In practice, pathologists synthesize evidence from morphology, cellular architecture, and molecular or biomarker profiles, an integrative approach that is critical in oncology for distinguishing clinically distinct tumor subtypes \citep{lipkova2022artificial, boehm2022harnessing, chen2020pathomic, sturm2016new, mackay2017integrated, petralia2020integrated, pfister2022molecular}.

\begin{figure}[t]
\centering
\includegraphics[width=\columnwidth]{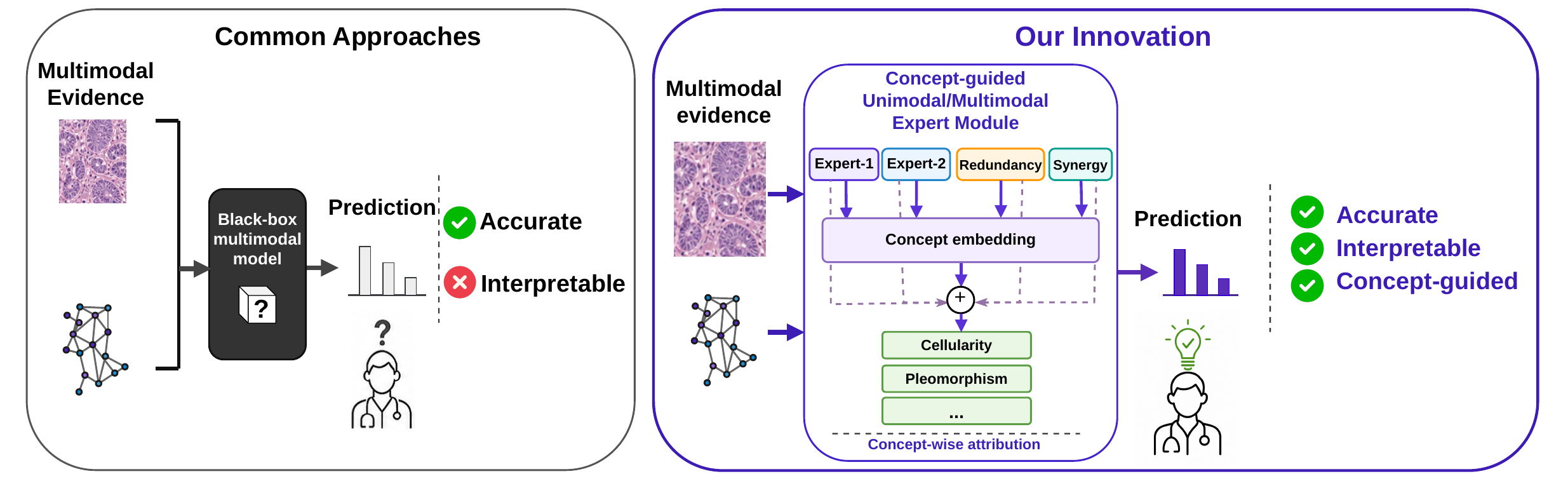}
\caption{\textbf{Illustrative comparison.}
Left: Common multimodal models produce accurate but opaque predictions from WSI and cell-graph evidence.
Right: Our concept-guided expert module links predictions to morphology concepts with unimodal and cross-modal attribution. Predictions are informed by concepts attributed separately to unimodal and cross-modal evidence.}
\label{fig:teaser}
\vspace{-1.5em}
\end{figure}

The complexity of cancer diagnosis necessitates multimodal learning to integrate images, clinical variables, and molecular data \citep{baltruvsaitis2019multimodal, liang2024survey, lahat2015multimodal, acosta2022multimodal, moor2023foundation, tu2024towards}. Prior research has established various fusion architectures, ranging from early and late fusion to tensor based, low-rank and attention driven approaches to model cross-modal interactions \citep{ngiam2011multimodal, zadeh2017tensor, liu2018efficient, nagrani2021attention, xu2023multimodal, chen2022multimodal}. In oncology, these methodologies have been applied to integrate histopathology, genomics, and EHR data for clinical outcome prediction and biomarker discovery \citep{huang2020fusion, stahlschmidt2022multimodal, lipkova2022artificial, boehm2022harnessing, chen2020pathomic, vanguri2022multimodal}. Mixture-of-experts (MoE) architectures offer further flexibility by specializing pathways to specific input sources or interaction patterns \citep{jacobs1991adaptive, jordan1994hierarchical, shazeer2017outrageously, mustafa2022multimodal, cai2024survey}. Recent work has explicitly modeled modality-specific, redundant, and synergistic expert interactions to improve transparency via expert reweighting \citep{xin2025i2moe}. However, clinical reliability requires not only correct information weighting but also evidence utilization consistent with pathological reasoning. Useful explanations should demonstrate how evidence from each source contributes to recognizable diagnostic features before affecting the final prediction, a level of detail current multimodal methods seldom provide.

Concept-based learning grounds model predictions in human-interpretable intermediate variables. Concept bottleneck models (CBM) enable inspection by predicting concepts prior to the final label \citep{koh2020concept}, while concept embedding models (CEM) utilize high-dimensional representations to mitigate accuracy-interpretability trade-offs \citep{espinosa2022concept}. Subsequent research has introduced functional variants including post-hoc, probabilistic, and language-guided models \citep{yuksekgonul2023posthoc, oikarinen2023label, kim2023probcbm, yang2023labo, srivastava2024vlgcbm, chauhan2023interactive}, alongside structured concept organizations \citep{de2026mixture, panousis2023coarse, xie2026hierarchical, hill2026hierarchical, wang2026conceptflow}. In parallel, computational pathology has progressed through weakly supervised foundation models \citep{lu2021data, chen2024uni, vorontsov2024virchow}, vision-language architectures utilizing pathology reports \citep{huang2024visual, lu2024visual, ding2024multimodal}, and multimodal integration of histopathology with omics data \citep{chen2020pathomic, chen2022multimodal, steva2022multimodal, vanguri2022multimodal, nguyen2026clinically, yu2026pathmoe}. However, while these pathology frameworks achieve strong performance, they typically do not employ concept-guided reasoning, relying instead on high-dimensional fusion that lacks semantic transparency. Consequently, explanations remain limited to slide or modality-level importance, failing to bridge the gap between multimodal evidence and recognizable clinical reasoning.

To address these challenges, we introduce \ours{} (\oursfull), a concept-informed multimodal prediction model which establishes a structured semantic interface between multimodal evidence and diagnostic decisions. An overview of \ours{} is shown in Fig.~\ref{fig:teaser}. Unlike standard fusion models, \ours{} embeds concept learning directly within an interaction-aware Mixture-of-Experts (MoE) architecture. Each expert, specialized either for unimodal evidence or for capturing redundant or synergistic cross-modal interactions, is internally structured as a CEM that maps latent features to clinical descriptors. These expert-specific representations are then organized into a two-level hierarchy where morphology concepts facilitate the learning of biomarker and molecular concepts, reflecting the clinical dependency between histological patterns and genetic status. To mitigate the performance degradation typically associated with interpretable bottlenecks, each expert utilizes a residual pathway to allow task-relevant signals to flow both through the human-understandable concepts and directly to the final task prediction, ensuring the model maintains high predictive power alongside transparency. Across two pediatric brain tumor cohorts comprising an institutional dataset and the public TCGA-GBM dataset, \ours{} achieves performance competitive with unconstrained models while producing modality attribution and concept-informed reasoning traces validated by an independent neuropathologist. Notably, in data-limited regimes ($N=50$), the framework demonstrates significant sample efficiency, increasing macro-F1 from $56.41\%$ to $66.70\%$ and exhibiting highly accelerated training convergence due to the regularizing effect of the structured concept bottleneck. 

Our contributions are summarized as follows. \textbf{(1) Multimodality-aware concept representation learning:} We formulate morphology concepts as traceable multimodal representations decomposed into modality-specific and cross-modal evidence. 
When biomarker labels are available, we further extend the model into a hierarchy where morphology concepts inform biomarker-level concepts that are less directly observable from histology but more closely tied to diagnosis. \textbf{(2) Multilevel interpretability and clinical validation:} We provide a framework for verifiable clinical reasoning by decomposing diagnostic features into modality-specific and cross-modal interaction-aware (redundant or synergistic) expert evidence. 
\textbf{(3) Competitive performance and sample efficiency:} Across institutional and public (TCGA-GBM) pediatric brain tumor cohorts, \ours{} is competitive with the performance of unconstrained concept-blind models while providing inherent interpretability. 

\vspace{-0.5em}

%% file: method.tex
\section{Method}
\vspace{-0.5em}
\label{sec:method}
\ours{} (illustrated in Fig.~\ref{fig:overview}) is motivated by a diagnostic reasoning question: can a multimodal model
organize histopathology-derived input modalities into a transparent pathway from
evidence sources to clinically meaningful concepts, then to final diagnosis? In this work, we use WSI and cell-graph as model inputs. Pathology reports and biomarker annotations are used to define the morphology and biomarker concept layers. A pathology concept can be supported by a single source of evidence, by information
shared across sources, or by synergistic evidence from multiple sources. For example, cellularity, pleomorphism, mitotic activity, necrosis, and Rosenthal fibers are morphology concepts extracted from pathology reports, while GFAP, Synaptophysin, INI1, H3K27M, and ALK1 are biomarker concepts obtained from available protein-level or molecular annotations.
At the diagnostic level, clinically meaningful interpretability requires exposing the concept-to-label
mapping, clarifying how morphology and biomarker concepts provide intermediate
clinical evidence for the final prediction. \ours{} is designed to learn this reasoning process by embedding concept
formation within an interaction-aware mixture-of-experts architecture. The
resulting representation exposes how each concept is
predicted from single-source signals, redundant signals shared across sources,
or synergistic signals jointly provided by multiple sources, and how these
concepts provide intermediate clinical evidence for the final diagnostic
prediction.

\begin{figure}[t]
\centering
\includegraphics[width=0.98\textwidth]{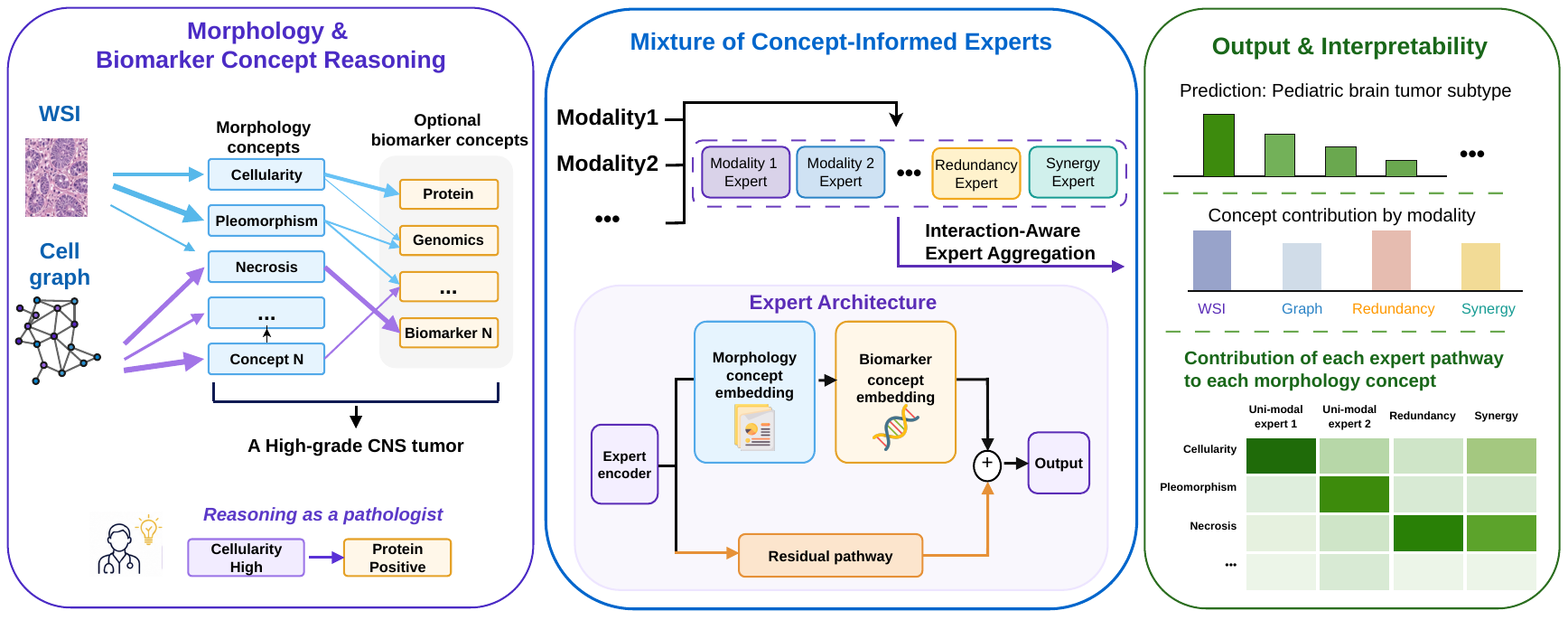}
\caption{\textbf{Overview of the proposed \ours{} framework.}
Left: WSI and cell-graph inputs are processed by unimodal, redundancy, and synergy experts, with each expert combining concept embedding and residual pathways. 
Middle: morphology concepts are learned from multimodal evidence and can further support optional biomarker concepts, forming a clinically meaningful reasoning layer. 
Right: the model outputs pediatric brain tumor subtype predictions and provides modality- and expert-level interpretability through concept-wise evidence attribution.}
\label{fig:overview}
\vspace{-1.5em}
\end{figure}
\vspace{-0.4em}
\subsection{Problem formulation}
\label{sec:problem_formulation}
\vspace{-0.4em}

Let
$
\mathcal{D}
=
\{(\mathbf{X}_n^{(1)},\ldots,\mathbf{X}_n^{(M)},y_n)\}_{n=1}^{N}
$
be a multimodal dataset with $M$ modalities and $C$ diagnostic classes, where $\mathbf{X}_n^{(m)}$ is modality $m$ of sample $n$ and $y_n\in\{1,\ldots,C\}$. 
We use $M=2$ histopathology-derived modalities: WSI patch features, capturing local tissue and cytologic patterns, and cell-graph features, capturing cellular organization and neighborhood structure. 
Each sample is annotated with a two-level concept hierarchy from pathology reports and biomarker annotations: $K_1$ morphology concepts with soft-ordinal targets $\mathbf{t}^{1}\in[0,1]^{K_1}$, and $K_2$ biomarker concepts with binary targets $\mathbf{t}^{2}\in\{0,1\}^{K_2}$. 
Appendix~\ref{app:concept} details the vocabulary, extraction prompts, and masking rules.

\vspace{-0.4em}
\subsection{Interaction-decomposed expert evidence}
\label{sec:interaction_experts}
\vspace{-0.4em}

Each modality is encoded as 
$\mathbf{e}_m=h_m(\mathbf{X}^{(m)})\in\mathbb{R}^{d}$, $m=1,\ldots,M$, 
using a modality-specific encoder $h_m$. 
In our histopathology setting, $M=2$: WSI patch features are aggregated by gated-attention MIL pooling~\cite{lu2021data}, while cell-graph features, constructed from the same slide to encode cellular organization and neighborhood structure, are encoded by GraphSAGE-style message passing followed by attention pooling~\cite{hamilton2017inductive}. 
Following the interaction semantics of \ItwoMoE{}~\cite{xin2025i2moe}, \ours{} defines $E=M+2$ expert pathways,
$
\mathcal{E}=\{U_1,\ldots,U_M,R,S\},
$
where $U_m$, $R$, and $S$ respectively capture modality-unique, cross-modal redundant, and cross-modal synergistic evidence. 
Here, redundancy denotes diagnostic cues consistently reflected by both WSI and cell-graph representations, whereas synergy denotes cues that become informative only when local tissue appearance and cellular architecture are integrated. 
Each expert outputs
$
\mathbf{z}_e=f_e([\mathbf{e}_1;\ldots;\mathbf{e}_M])\in\mathbb{R}^{d},
$
$e\in\mathcal{E}$.

Expert semantics are enforced through a perturbation loss. 
Let $\mathbf{p}_e^{\mathrm{clean}}$ denote the prediction of expert $e$ from the original modality embeddings, and $\mathbf{p}_e^{m}$ its prediction after injecting Gaussian noise into $\mathbf{e}_m$. 
We define
{\setlength{\abovedisplayskip}{1pt}
\setlength{\belowdisplayskip}{1pt}
\setlength{\abovedisplayshortskip}{1pt}
\setlength{\belowdisplayshortskip}{1pt}
\begin{equation}
  \mathcal{L}_{\mathrm{int}}^e
  =
  \sum_{m=1}^{M}
  s_e^m
  D_{\mathrm{KL}}
  \left(
    \mathbf{p}_e^{\mathrm{clean}}\,\|\,\mathbf{p}_e^{m}
  \right),
  \label{eq:interaction_loss}
\end{equation}}
where $s_e^m=-1$ if expert $e$ is encouraged to be sensitive to modality $m$, and $s_e^m=+1$ if it is encouraged to be invariant. 
Thus, $U_m$ is sensitive only to modality $m$, $R$ is invariant to single-modality perturbations, and $S$ is sensitive to all modalities. Detailed perturbation and sign conventions are provided in Appendix~\ref{app:interaction_details}.
\vspace{-0.4em}
\subsection{Interaction-aware concept formation}
\label{sec:concept_formation}

The key difference from a standard concept bottleneck is that concepts are formed \emph{inside each expert}. For expert $e$ and concept $k$, we compute positive and negative concept states from the expert latent:
\begin{equation}
  \mathbf{c}_{e,k}^{+} = \mathrm{LeakyReLU}(\phi_{e,k}^{+}(\mathbf{z}_e)), \qquad
  \mathbf{c}_{e,k}^{-} = \mathrm{LeakyReLU}(\phi_{e,k}^{-}(\mathbf{z}_e)), \qquad
  \mathbf{c}_{e,k}^{\pm}\in\mathbb{R}^{d_c}.
  \label{eq:positive_negative_concepts}
\end{equation}
For ordinal morphology concepts, such as necrosis or pleomorphism, the positive and negative states encode evidence for higher versus lower concept expression, allowing both the presence and relative absence of clinically meaningful findings to inform diagnosis. 
The scalar concept activation is $p_{e,k} = \sigma\bigl(s_{e,k}([\mathbf{c}_{e,k}^{+};\mathbf{c}_{e,k}^{-}])\bigr) \in[0,1]$, which is supervised by the corresponding concept target. The concept representation passed to the task head is
\begin{equation}
  \hat{\mathbf{c}}_{e,k} = p_{e,k}\mathbf{c}_{e,k}^{+} + (1-p_{e,k})\mathbf{c}_{e,k}^{-} + \gamma_{\mathrm{res}}\,\psi_{e,k}(\mathbf{z}_e),
  \label{eq:soft_concept_embedding}
\end{equation}
where $\psi_{e,k}:\mathbb{R}^{d}\rightarrow\mathbb{R}^{d_c}$ is a residual projection and $\gamma_{\mathrm{res}}\in\{0,1\}$ controls the residual pathway. Here, $p_{e,k}$ provides the readable concept value, while $\hat{\mathbf{c}}_{e,k}$ preserves within-concept variation and residual evidence beyond the predefined vocabulary, allowing concepts to regularize expert representations without collapsing prediction to a hard list of scalar probabilities.

\textbf{Hierarchical concepts.}
When biomarker annotations are available, concepts are organized into two levels. The first level contains morphology concepts grounded in histologic evidence; the second level contains protein-level and molecular biomarker concepts, which are not directly observable but can be predicted from the expert latent representation and the first-level morphology concepts. 
This hierarchy reflects brain tumor subtyping practice, where H\&E morphology summarizes observable tissue findings and biomarkers provide additional context for subtype refinement. First-level morphology concept embeddings are collected as $\mathbf{B}_{e,1} = [\hat{\mathbf{c}}_{e,1};\ldots;\hat{\mathbf{c}}_{e,K_1}] \in\mathbb{R}^{K_1d_c}$, where each $\hat{\mathbf{c}}_{e,k}$ includes the per-concept residual $\psi_{e,k}(\mathbf{z}_e)$ defined in Eq.~\eqref{eq:soft_concept_embedding}. 
The second-level input concatenates the expert latent with the first-level concept embeddings: $\mathbf{z}_{e,2} = [\mathbf{z}_e;\mathbf{B}_{e,1}] \in \mathbb{R}^{d + K_1 d_c}$. Each biomarker block independently maps $\mathbf{z}_{e,2}$ to $\hat{\mathbf{c}}_{e,K_1+j}$ using the same per-concept residual mechanism as Eq.~\eqref{eq:soft_concept_embedding}, yielding
\begin{equation}
  \mathbf{B}_{e,2} = [\hat{\mathbf{c}}_{e,K_1+1};\ldots;\hat{\mathbf{c}}_{e,K_1+K_2}] \in\mathbb{R}^{K_2d_c}.
  \label{eq:l2_concepts}
\end{equation}
Thus, morphology-level evidence can inform biomarker-level concept prediction while the raw expert latent remains available. The expert logit is $\ell_e = g_e([\mathbf{B}_{e,1};\mathbf{B}_{e,2}]) \in\mathbb{R}^{C}$. If $K_2=0$, the second level is omitted and $\ell_{e}=g_e(\mathbf{B}_{e,1})$.

\textbf{Soft bottleneck preserves the raw expert predictor.}
Eq.~\eqref{eq:soft_concept_embedding} augments each supervised concept with an embedding and residual projection, providing clinically organized concept coordinates while retaining access to the expert latent. This is useful in pathology, where predefined concepts may only partially cover the visual evidence relevant to diagnosis; the residual pathway preserves complementary signal beyond explicitly supervised concepts. The following proposition formalizes this soft-bottleneck property.

For a fixed expert, let $\mathbf{r}_{e,1} = [\psi_{e,1}(\mathbf{z}_e);\ldots;\psi_{e,K_1}(\mathbf{z}_e)]$ and define the representation chain:
\begin{equation}
  R_{e,0} = \mathbf{z}_e, \qquad R_{e,1} = [\mathbf{z}_e;\mathbf{B}_{e,1};\mathbf{r}_{e,1}], \qquad R_{e,2} = [\mathbf{z}_e;\mathbf{B}_{e,1};\mathbf{r}_{e,1};\mathbf{B}_{e,2}].
  \label{eq:soft_concept_rep_chain}
\end{equation}

\begin{proposition}[Soft concept augmentation preserves the raw expert predictor]
\label{prop:soft_concept_augmentation}
Let $Y$ be the class label and $\mathcal R_{\log}^{\star}(R) = \inf_q \mathbb E[-\log q(Y\mid R)]$ be the optimal population log-loss from representation $R$. The representations in Eq.~\eqref{eq:soft_concept_rep_chain} satisfy $I(R_{e,0};Y) \le I(R_{e,1};Y) \le I(R_{e,2};Y)$ and $\mathcal R_{\log}^{\star}(R_{e,2}) \le \mathcal R_{\log}^{\star}(R_{e,1}) \le \mathcal R_{\log}^{\star}(R_{e,0})$.
Moreover, if $\mathcal F_j$ denotes the task-head class on $R_{e,j}$ and the head can zero out appended coordinates, then $\mathcal F_0 \subseteq \mathcal F_1 \subseteq \mathcal F_2$. Let $\widehat{\mathcal L}_{\mathrm{cls}}(f;R_{e,j})$ be the empirical classification loss of head $f$ on representation $R_{e,j}$. Then
\begin{equation*}
  \inf_{f\in\mathcal F_2} \widehat{\mathcal L}_{\mathrm{cls}}(f;R_{e,2}) \le \inf_{f\in\mathcal F_1} \widehat{\mathcal L}_{\mathrm{cls}}(f;R_{e,1}) \le \inf_{f\in\mathcal F_0} \widehat{\mathcal L}_{\mathrm{cls}}(f;R_{e,0}).
\end{equation*}
If the appended concept and residual coordinates are deterministic functions of $\mathbf{z}_e$ and $\mathbf{z}_e$ is retained, then the information guarantee becomes exact preservation: $I(R_{e,j};Y)=I(\mathbf{z}_e;Y)$ for $j\in\{0,1,2\}$.
\end{proposition}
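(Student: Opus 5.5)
The plan has three parts. First, the information inequalities follow from the nested structure of the chain in Eq.~\eqref{eq:soft_concept_rep_chain}. Second, the risk and empirical-loss inequalities follow from a class-inclusion argument. Third, exact preservation follows from the data processing inequality.

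\textbf{Information and optimal log-loss.} Each $R_{e,j}$ is a sub-vector of $R_{e,j+1}$: $R_{e,1}=[R_{e,0};\mathbf{B}_{e,1};\mathbf{r}_{e,1}]$ and $R_{e,2}=[R_{e,1};\mathbf{B}_{e,2}]$. So $R_{e,j}=\pi_j(R_{e,j+1})$ for a coordinate projection $\pi_j$, and $Y\to R_{e,j+1}\to R_{e,j}$ is a Markov chain. The data processing inequality then gives $I(R_{e,j};Y)\le I(R_{e,j+1};Y)$. Equivalently, the chain rule gives $I(R_{e,j+1};Y)=I(R_{e,j};Y)+I(\text{appended};Y\mid R_{e,j})$, and the last term is nonnegative.

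For the log-loss, I would use the standard identity $\mathcal R_{\log}^{\star}(R)=H(Y\mid R)$. The infimum over conditional distributions $q$ is attained at the true posterior $P(Y\mid R)$, since for any other $q$ the excess is $\mathbb E[D_{\mathrm{KL}}(P(\cdot\mid R)\,\|\,q(\cdot\mid R))]\ge 0$. Since $H(Y\mid R)=H(Y)-I(R;Y)$, the risk inequalities follow from the information inequalities. A more direct route also works: any $q(Y\mid R_{e,j})$ is a valid predictor on $R_{e,j+1}$ via $q(Y\mid \pi_j(R_{e,j+1}))$, so the infimum over the larger domain is no larger. This second argument avoids measure-theoretic issues when $H(Y)$ is finite but posteriors are awkward, so I would present it as the primary argument. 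I would assume $Y$ is discrete with finitely many classes, so all entropies are finite.

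\textbf{Function classes and empirical loss.} The hypothesis that the head can zero out appended coordinates means that for every $f\in\mathcal F_j$ there is $\tilde f\in\mathcal F_{j+1}$ with $\tilde f(R_{e,j+1})=f(\pi_j(R_{e,j+1}))$. For example, one can set the weights on the appended coordinates to zero in the first layer. Identifying $f$ with $\tilde f$ gives $\mathcal F_j\subseteq\mathcal F_{j+1}$. For each training sample, $\tilde f$ produces exactly the same output as $f$, so $\widehat{\mathcal L}_{\mathrm{cls}}(\tilde f;R_{e,j+1})=\widehat{\mathcal L}_{\mathrm{cls}}(f;R_{e,j})$. Taking infima over $f$ gives the chain of inequalities. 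This step is the main point needing care: the statement implicitly fixes the representation maps, namely the trained $\phi$, $\psi$ and $s$. The inclusion is pointwise for those fixed maps and does not compare different training runs. I will state this explicitly.

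\textbf{Exact preservation.} If all appended coordinates equal $T_j(\mathbf{z}_e)$ for deterministic measurable maps $T_j$, then $R_{e,j}=(\mathbf{z}_e,T_j(\mathbf{z}_e))$. This is a function of $\mathbf{z}_e$, and conversely $\mathbf{z}_e=\pi(R_{e,j})$. Applying data processing in both directions gives $I(R_{e,j};Y)\le I(\mathbf{z}_e;Y)\le I(R_{e,j};Y)$, hence equality. The identity $\mathcal R_{\log}^{\star}(R_{e,j})=\mathcal R_{\log}^{\star}(\mathbf{z}_e)$ follows in the same way.

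This hypothesis holds by construction. Eqs.~\eqref{eq:positive_negative_concepts}--\eqref{eq:l2_concepts} build $\mathbf{c}^{\pm}_{e,k}$, $p_{e,k}$, $\hat{\mathbf{c}}_{e,k}$ and $\psi_{e,k}$ from $\mathbf{z}_e$ alone. $\mathbf{B}_{e,2}$ is built from $\mathbf{z}_{e,2}=[\mathbf{z}_e;\mathbf{B}_{e,1}]$, which is itself a function of $\mathbf{z}_e$. This holds at inference, where no dropout or stochastic layers are used.
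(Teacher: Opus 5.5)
Your proposal is correct and follows the paper's proof. Both use nested coordinate subvectors to get the mutual-information ordering (the paper via the chain rule, which you also note is equivalent to your data-processing phrasing), weight-zeroing to embed $\mathcal F_j$ into $\mathcal F_{j+1}$, and the fact that $R_{e,j}$ and $\mathbf{z}_e$ carry the same information (same generated $\sigma$-algebra) for exact preservation. The only real deviation is your primary argument for the log-loss ordering: you compose any $q(Y\mid R_{e,j})$ with the coordinate projection, whereas the paper goes through $\mathcal R_{\log}^{\star}(R)=H(Y)-I(R;Y)$; with finitely many classes both are valid.
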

We evaluate this property through the information-plane analysis in Figs.~\ref{fig:mi_plane} and~\ref{fig:mi_curve}, where concept-augmented representations retain label-relevant dependence rather than collapsing into a lossy scalar bottleneck. The proof is in Appendix~\ref{app:proof_soft_concept_augmentation}.

\subsection{Training objective and statistical interpretation}
\label{sec:training_objective}

The model is optimized with classification, concept supervision, and interaction specialization: $\mathcal{L} = \mathcal{L}_{\mathrm{cls}} + \lambda_1\mathcal{L}_{\mathrm{concept}}^{\mathrm{L1}} + \lambda_2\mathcal{L}_{\mathrm{concept}}^{\mathrm{L2}} + \lambda_{\mathrm{int}}\mathcal{L}_{\mathrm{int}}$, where $\mathcal{L}_{\mathrm{cls}}$ is class-weighted cross-entropy, $\mathcal{L}_{\mathrm{int}} = |\mathcal{E}|^{-1} \sum_{e\in\mathcal{E}}\mathcal{L}_{\mathrm{int}}^e$, and $\lambda_1,\lambda_2,\lambda_{\mathrm{int}}$ are hyperparameters. Concept supervision is applied only to observed concepts:
\begin{equation}
  \mathcal{L}_{\mathrm{concept}}^{\mathrm{L}q} = \frac{1}{|\mathcal{E}|} \sum_{e\in\mathcal{E}} \frac{\sum_{n\in\mathcal{B}}\sum_{k\in\mathcal{K}_q} m_{n,k}^{q}\, \ell_q(p_{e,n,k},t_{n,k}^{q})}{\sum_{n\in\mathcal{B}}\sum_{k\in\mathcal{K}_q} m_{n,k}^{q} + \varepsilon}, \qquad q\in\{1,2\},
  \label{eq:concept_loss}
\end{equation}
where $\mathcal{B}$ is a minibatch, $\mathcal{K}_q$ is the concept index set at level $q$, $m_{n,k}^{q}$ indicates whether concept $k$ is observed for sample $n$, $\ell_q$ is the level-specific concept loss, and $\varepsilon$ prevents division by zero.

\textbf{Concept supervision localizes the predictor class.}
Let $\mathcal H$ be the predictor class induced by the architecture, $\ell_{\mathrm{cls}}(h(X),Y)$ the per-sample classification loss, and $\mathcal R_{\mathrm{cls}}(h) = \mathbb E[\ell_{\mathrm{cls}}(h(X),Y)]$ and $\widehat{\mathcal R}_{\mathrm{cls}}(h) = n^{-1}\sum_{i=1}^{n}\ell_{\mathrm{cls}}(h(X_i),Y_i)$ its population and empirical risks. Let $\mathcal R_{\mathrm{concept}}^{\mathrm{L}q}(h)$ denote the population counterpart of Eq.~\eqref{eq:concept_loss}, and define $\mathcal C(h) = \lambda_1\mathcal R_{\mathrm{concept}}^{\mathrm{L1}}(h) + \lambda_2\mathcal R_{\mathrm{concept}}^{\mathrm{L2}}(h)$. For $\tau\ge0$, consider the localized class $\mathcal H_\tau = \{h\in\mathcal H:\mathcal C(h)\le\tau\}$. The concept terms in the training objective can be viewed as a Lagrangian relaxation of this constrained formulation, guiding the model toward clinically aligned representations while retaining the diagnostic objective.

\begin{theorem}[Concept-aligned localization]
\label{thm:concept_localization}
Assume $0\le\ell_{\mathrm{cls}}(h(X),Y)\le B_{\ell}$ for all $h\in\mathcal{H}_{\tau}$, and let $\hat h_{\tau} \in \arg\min_{h\in\mathcal{H}_{\tau}} \widehat{\mathcal R}_{\mathrm{cls}}(h)$. Then, with probability at least $1-\delta$,
\begin{equation*}
  \mathcal R_{\mathrm{cls}}(\hat h_{\tau}) \le \inf_{h\in\mathcal H_{\tau}}\mathcal R_{\mathrm{cls}}(h) + 4\mathfrak R_n(\ell_{\mathrm{cls}}\circ\mathcal H_{\tau}) + 2B_{\ell}\sqrt{\frac{\log(2/\delta)}{2n}},
\end{equation*}
where $\mathfrak R_n(\ell_{\mathrm{cls}}\circ\mathcal H_{\tau})$ is the empirical Rademacher complexity. Since $\mathcal H_{\tau}\subseteq\mathcal H$, $\mathfrak R_n(\ell_{\mathrm{cls}}\circ\mathcal H_{\tau}) \le \mathfrak R_n(\ell_{\mathrm{cls}}\circ\mathcal H)$. If $\mathcal H_{\tau}$ is nearly task-sufficient, i.e., $\inf_{h\in\mathcal H_{\tau}}\mathcal R_{\mathrm{cls}}(h) \le \inf_{h\in\mathcal H}\mathcal R_{\mathrm{cls}}(h) + \epsilon_{\mathrm{app}},$ then localization improves the finite-sample bound whenever the complexity reduction exceeds $\epsilon_{\mathrm{app}}$.
\end{theorem}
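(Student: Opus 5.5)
The plan is a standard uniform-convergence argument over the localized class $\mathcal H_\tau$, with the usual three-term excess-risk decomposition. We treat $\mathcal H_\tau$ as a fixed, data-independent class. This holds because $\mathcal C(h)$ is defined through population concept risks, so membership in $\mathcal H_\tau$ does not depend on the sample.

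Let $h^\star_\tau$ be any element of $\mathcal H_\tau$ whose population risk is within $\eta>0$ of $\inf_{h\in\mathcal H_\tau}\mathcal R_{\mathrm{cls}}(h)$; we send $\eta\to0$ at the end. We write
\begin{equation*}
\mathcal R_{\mathrm{cls}}(\hat h_\tau)-\mathcal R_{\mathrm{cls}}(h^\star_\tau)
=\bigl[\mathcal R_{\mathrm{cls}}(\hat h_\tau)-\widehat{\mathcal R}_{\mathrm{cls}}(\hat h_\tau)\bigr]
+\bigl[\widehat{\mathcal R}_{\mathrm{cls}}(\hat h_\tau)-\widehat{\mathcal R}_{\mathrm{cls}}(h^\star_\tau)\bigr]
+\bigl[\widehat{\mathcal R}_{\mathrm{cls}}(h^\star_\tau)-\mathcal R_{\mathrm{cls}}(h^\star_\tau)\bigr].
\end{equation*}
The middle bracket is nonpositive because $\hat h_\tau$ is the empirical risk minimizer over $\mathcal H_\tau$. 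Both outer brackets are at most $\Phi=\sup_{h\in\mathcal H_\tau}|\mathcal R_{\mathrm{cls}}(h)-\widehat{\mathcal R}_{\mathrm{cls}}(h)|$. Hence the excess risk is at most $2\Phi$, and it remains to bound $\Phi$.

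To bound $\Phi$, we use the two-sided uniform deviation bound for losses in $[0,B_\ell]$. McDiarmid's inequality applies to each one-sided supremum, since changing one sample moves it by at most $B_\ell/n$. Symmetrization bounds the expected supremum by twice the expected Rademacher complexity. A second application of McDiarmid converts expected into empirical Rademacher complexity, again with bounded differences $B_\ell/n$. Allocating $\delta/2$ to each side yields, with probability at least $1-\delta$,
\begin{equation*}
\Phi\le 2\mathfrak R_n(\ell_{\mathrm{cls}}\circ\mathcal H_\tau)+cB_\ell\sqrt{\log(2/\delta)/(2n)},
\end{equation*}
for a small constant $c$. Doubling this gives the displayed $4\mathfrak R_n$ term together with a deviation term of order $B_\ell\sqrt{\log(2/\delta)/(2n)}$.

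The main obstacle is the constant bookkeeping. Matching exactly $2B_\ell\sqrt{\log(2/\delta)/(2n)}$ while using the \emph{empirical} rather than expected Rademacher complexity normally costs a factor of $3$ in the deviation term. To avoid this, we would state the result with the expected Rademacher complexity, or, in the paper's phrasing, take $\mathfrak R_n$ in the averaged sense. Then the only source of deviation is the concentration of $\Phi$ around its mean, split across the two outer brackets, and the constant $2$ follows. Alternatively, one can bound the third bracket by Hoeffding's inequality applied to the single fixed function $h^\star_\tau$, which is cheaper, and use the uniform bound only for the first bracket.

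The remaining claims are short. Monotonicity $\mathfrak R_n(\ell_{\mathrm{cls}}\circ\mathcal H_\tau)\le\mathfrak R_n(\ell_{\mathrm{cls}}\circ\mathcal H)$ holds because a supremum over a subset is no larger, applied pointwise in $\sigma$ before taking the expectation. For the comparison, we substitute the task-sufficiency assumption into the bound. The localized bound is then at most $\inf_{\mathcal H}\mathcal R_{\mathrm{cls}}+\epsilon_{\mathrm{app}}+4\mathfrak R_n(\ell_{\mathrm{cls}}\circ\mathcal H_\tau)+(\text{dev})$, while the unconstrained ERM bound is $\inf_{\mathcal H}\mathcal R_{\mathrm{cls}}+4\mathfrak R_n(\ell_{\mathrm{cls}}\circ\mathcal H)+(\text{dev})$, provided the boundedness assumption also holds on $\mathcal H$. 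The former is smaller precisely when $4[\mathfrak R_n(\ell_{\mathrm{cls}}\circ\mathcal H)-\mathfrak R_n(\ell_{\mathrm{cls}}\circ\mathcal H_\tau)]>\epsilon_{\mathrm{app}}$, which is the stated condition once the complexity reduction is measured in the units appearing in the bound.
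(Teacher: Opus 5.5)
Your proposal is correct and follows essentially the same route as the paper. Both arguments take a two-sided uniform deviation bound $\Delta_\tau = 2\mathfrak R_n(\ell_{\mathrm{cls}}\circ\mathcal H_\tau)+B_\ell\sqrt{\log(2/\delta)/(2n)}$ over $\mathcal H_\tau$, apply the ERM chain $\mathcal R_{\mathrm{cls}}(\hat h_\tau)\le\widehat{\mathcal R}_{\mathrm{cls}}(\hat h_\tau)+\Delta_\tau\le\widehat{\mathcal R}_{\mathrm{cls}}(h)+\Delta_\tau\le\mathcal R_{\mathrm{cls}}(h)+2\Delta_\tau$, and finish with monotonicity of the Rademacher complexity and the near-sufficiency substitution.

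Your remark on constants is also correct. The paper cites this ``standard'' bound while calling $\mathfrak R_n$ the empirical complexity, but the deviation term $B_\ell\sqrt{\log(2/\delta)/(2n)}$ is the one that holds for the expected Rademacher complexity. So reading $\mathfrak R_n$ in the averaged sense, or accepting a larger constant, is what the stated bound actually requires.
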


Theorem~\ref{thm:concept_localization} shows that clinical concepts can localize the search space while preserving diagnostic predictors. We evaluate this in the subsampling experiment in Fig.~\ref{fig:sample_efficiency}; the proof is in Appendix~\ref{app:proof_concept_localization}.

\textbf{Optimization implication.} The same objective also introduces auxiliary gradients from concept supervision
and interaction specialization. Proposition~\ref{prop:aux_gradient_contraction}
in Appendix~\ref{app:proof_aux_gradient_contraction} shows that, under
smoothness, Polyak-\L{}ojasiewicz (PL) condition, and gradient-alignment conditions, the joint update
contracts the classification loss faster than the task-only update. This explains Fig.~\ref{fig:loss_curves}: faster convergence is expected when the
auxiliary gradients align with the diagnostic objective. 
\vspace{-0.5em}

%% file: experiment.tex
\section{Experiments}
\label{sec:experiments}
\vspace{-0.5em}

We evaluate \ours{} across four key dimensions: (1) predictive performance on par with state-of-the-art multimodal baselines (Sec.~\ref{sec:main_results}); (2) the clinical validity and interpretability of the learned concept activations and expert gates (Sec.~\ref{sec:interpretability}); (3) predictive robustness in data-limited regimes and training convergence dynamics (Sec.~\ref{sec:sample_efficiency}); and (4) the preservation and semantic reorganization of task-relevant information, analyzed via the information plane (Sec.~\ref{sec:mi_analysis}). Extensive architectural ablation studies are deferred to Appendix~\ref{app:ablation}.


\vspace{-0.4em}
\subsection{Experimental Setup}
\label{sec:exp_setup}
\vspace{-0.4em}
\paragraph{Datasets.}
We evaluate on an institutional pediatric brain tumor cohort (PBT) from Dell Children's Medical Center and a public adult glioma cohort (TCGA) ~\cite{heath2021nci, cancer2008comprehensive, cancer2015comprehensive}. 
Central nervous system (CNS) tumors provide a clinically relevant setting because complex subtyping often relies on morphology together with immunohistochemical or molecular evidence. PBT is primary cohort, while TCGA provides an external cohort with related CNS tumor classes for assessing cross-cohort generalization.
PBT contains 199 diagnostic WSIs from 196 patients across 4 classes: Ependymoma, High-grade CNS, Low-grade CNS, and Non-glial. 
Each sample provides WSI patches (UNIv2~\cite{chen2024uni}), a cell graph, and pathology report text. Concept annotations are LLM-extracted from reports: $K_1$ morphology concepts (soft ordinal in $[0,1]$) and $K_2$ IHC biomarker concepts (binary labels, for PBT only). 
The TCGA dataset contains 208 WSIs from 67 patients across 2 classes: High-grade CNS and Low-grade CNS.

\textbf{Evaluation Protocol.} All models are evaluated via cross-validation with strict patient-level splits to prevent data leakage. The primary evaluation metric is macro-averaged F1. We also report per-class precision and recall. 
\begin{table}[!htbp]
\vspace{-1em}
\setlength{\tabcolsep}{4pt}
\captionsetup{font=small,skip=2pt}
\caption{Backbone comparison on PBT and TCGA datasets. \ours{} is the flat (morph) variant. ``HG/LG''= HG/LG-CNS. ``NGT'' = Non-glial tumor. \textbf{Bold}: best, \underline{underline}: second best.}\label{tab:per_class_metrics_main}
\vspace{-0.2em}
\centering
\resizebox{\linewidth}{!}{%
\begin{tabular}{l cccc ccc cc ccc}
\toprule
& \multicolumn{7}{c}{PBT} & \multicolumn{5}{c}{TCGA} \\
\cmidrule(lr){2-8} \cmidrule(lr){9-13}
& \multicolumn{4}{c}{Per-class F1} & \multicolumn{3}{c}{Macro} & \multicolumn{2}{c}{Per-class F1} & \multicolumn{3}{c}{Macro} \\
\cmidrule(lr){2-5} \cmidrule(lr){6-8} \cmidrule(lr){9-10} \cmidrule(lr){11-13}
Model & Epend. & LG & HG & NGT & P & R & F1 & LG & HG & P & R & F1 \\
\midrule
Early Fusion & \textbf{0.600} & 0.844 & 0.742 & 0.732 & 0.746 & 0.735 & 0.729 & 0.810 & 0.815 & 0.839 & \underline{0.825} & 0.812 \\
MoEPP        & \underline{0.400} & 0.864 & \underline{0.792} & 0.737 & 0.728 & 0.703 & 0.698 & 0.816 & \underline{0.816} & \underline{0.857} & 0.824 & 0.816 \\
Switch Gate  & 0.200 & 0.859 & 0.775 & 0.773 & 0.673 & 0.662 & 0.652 & 0.782 & 0.815 & 0.833 & 0.808 & 0.799 \\
PathMoE      & \textbf{0.600} & \textbf{0.914} & \textbf{0.821} & \textbf{0.862} & \textbf{0.812} & \textbf{0.801} & \textbf{0.799} & \underline{0.853} & \textbf{0.888} & \textbf{0.890} & \textbf{0.875} & \textbf{0.871} \\
$\ours{}$    & \textbf{0.600} & \underline{0.884} & \textbf{0.821} & \underline{0.787} & \underline{0.782} & \underline{0.765} & \underline{0.773} & \textbf{0.871} & 0.771 & 0.831 & 0.823 & \underline{0.821} \\
\bottomrule
\end{tabular}%
}
\vspace{-0.6em}
\end{table}

\textbf{Baselines.} We compare \ours{} against two families of state-of-the-art methods: (1) \textit{MoE-based fusion models} operating on WSI, text, and graph modalities, including Early Fusion, MoEPP, Switch Gate, and PathMoE~\cite{yu2026pathmoe} (\ItwoMoE); and (2) \textit{Concept-aware models}, including CBM~\cite{koh2020concept} and CEM~\cite{espinosa2022concept}, augmented with different backbones and concept hierarchies.

Detailed dataset preprocessing, optimization, and hardware configurations are deferred to Appendix~\ref{app:impl_details}.

\vspace{-0.4em}
\subsection{Main Results}
\label{sec:main_results}
\vspace{-0.4em}

Table~\ref{tab:per_class_metrics_main} compares standard multimodal fusion baselines: Early Fusion, MoEPP, Switch Gate, and PathMoE~\citep{yu2026pathmoe}, against \ours{}. Table~\ref{tab:per_class_metrics_concept} ablates the bottleneck design within the \ours{} family. The \emph{flat} variant uses a single concept layer of biomarkers; the \emph{hier} variant uses two layers, with morphology concepts in the first layer feeding into biomarkers in the second. The \emph{(CBM)} ablation replaces the continuous concept embeddings (CEM) with discrete CBM-style scalar predictions, isolating the contribution of the embedding-based bottleneck. The \emph{(WSI-only)} ablation removes the graph modality entirely, retaining only the WSI expert to quantify the value of multimodal evidence. All variants are evaluated under identical training settings.

\begin{table}[htbp]
\centering
\setlength{\tabcolsep}{4pt} 
\captionsetup{font=small,skip=2pt}
\caption{Concept-bottleneck variants of \ours{} on PBT. ``HG/LG''= HG/LG-CNS. ``NGT" = Non-glial tumor.}\label{tab:per_class_metrics_concept}
\vspace{0.2em}
\resizebox{0.9\linewidth}{!}{
\begin{tabular}{l cccc ccc c}
\toprule
& \multicolumn{4}{c}{Per-class F1} & \multicolumn{3}{c}{Macro} & Concept \\
\cmidrule(lr){2-5} \cmidrule(lr){6-8}
\begin{tabular}[b]{@{}l@{}}\ours{} \\ Variant\end{tabular} & Epend. & LG & HG & NGT & P & R & F1 & AUROC \\ 
\midrule
flat (CBM+bio)        & 0.400          & 0.389          & 0.591          & 0.278          & 0.421          & 0.426          & 0.415          & 0.645 \\
flat (morph)          & \textbf{0.600} & \textbf{0.884} & \textbf{0.821} & 0.787          & \textbf{0.782} & \textbf{0.765} & \textbf{0.773} & 0.672 \\
flat (bio)            & 0.500          & 0.878          & 0.809          & 0.810          & 0.768          & 0.753          & 0.749          & 0.726 \\
\midrule
hier (WSI+morph+bio)  & 0.500          & 0.869          & 0.812          & 0.772          & 0.752          & 0.741          & 0.738          & 0.700 \\
hier (CBM+morph+bio)  & 0.233          & 0.421          & 0.586          & 0.232          & 0.372          & 0.395          & 0.368          & 0.585 \\
hier (morph+bio)      & 0.500          & 0.873          & 0.798          & \textbf{0.818} & 0.760          & 0.740          & 0.747          & \textbf{0.761} \\
\bottomrule
\end{tabular}%
}
\end{table}


\vspace{-0.4em}
\subsection{Interpretability Analysis}
\label{sec:interpretability}
\vspace{-0.4em}
\ours{}'s gating network produces a per-sample weight vector
$\boldsymbol{\alpha} = \mathrm{softmax}(g(\mathbf{x}))$ over experts;
class-averaged $\boldsymbol{\alpha}$ reveals which evidence \ours{} relies
on for which tumor type, a quantitative view of modality usage opaque in
standard fusion models that lets us check whether learned specializations
align with histopathological expectations.

\begin{figure}[htbp]
\centering
\begin{subfigure}[t]{0.32\textwidth}
  \centering
  \includegraphics[width=\linewidth]{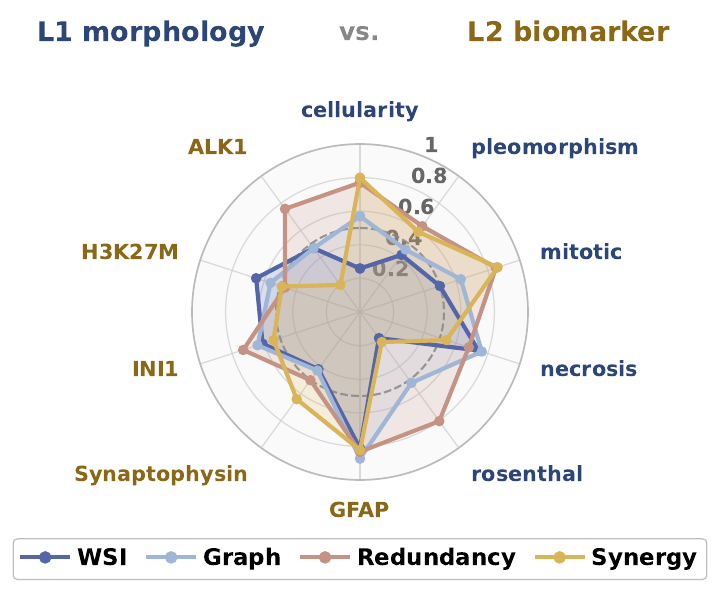}
  \caption{Per-expert concept AUROC.}
  \label{fig:per_expert_auroc}
\end{subfigure}\hfill
\begin{subfigure}[t]{0.32\textwidth}
  \centering
  \includegraphics[width=\linewidth]{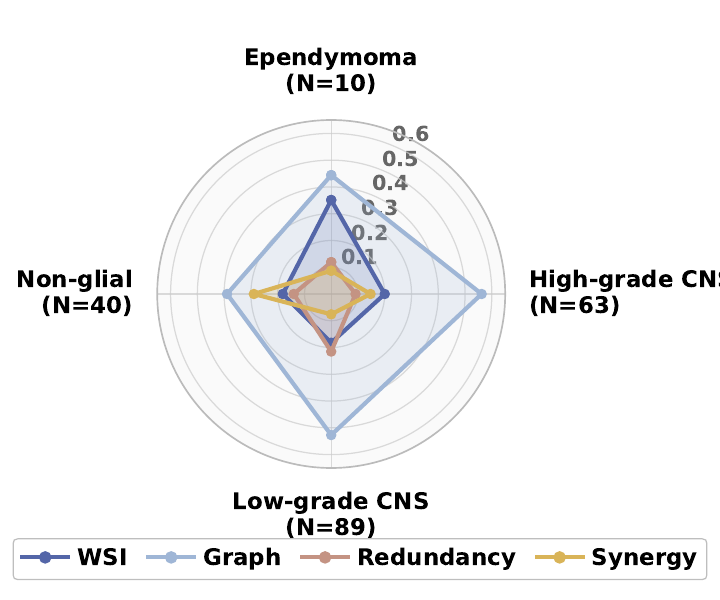}
  \caption{Per-class gate weights.}
  \label{fig:per_class_gate}
\end{subfigure}\hfill
\begin{subfigure}[t]{0.32\textwidth}
  \centering
  \includegraphics[width=\linewidth]{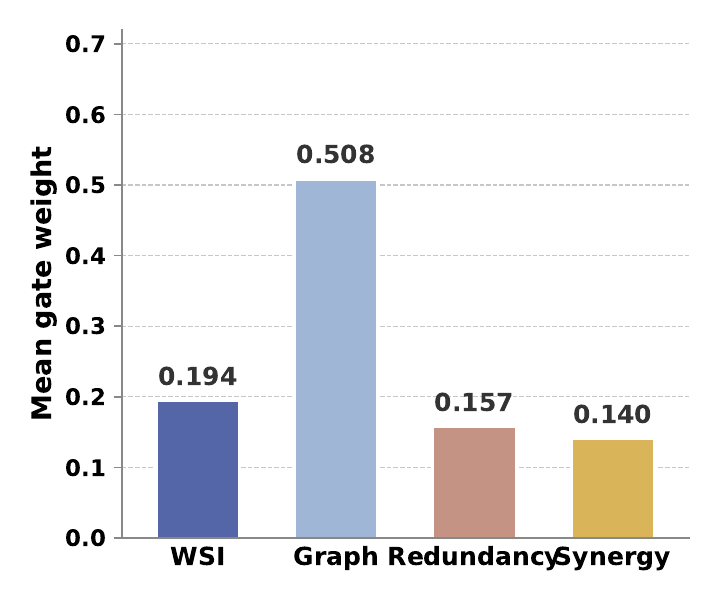}
  \caption{Overall routing distribution.}
  \label{fig:routing_dist}
\end{subfigure}

\vspace{0.5\baselineskip}

\begin{subfigure}[t]{0.49\textwidth}
  \centering
  \includegraphics[width=\linewidth]{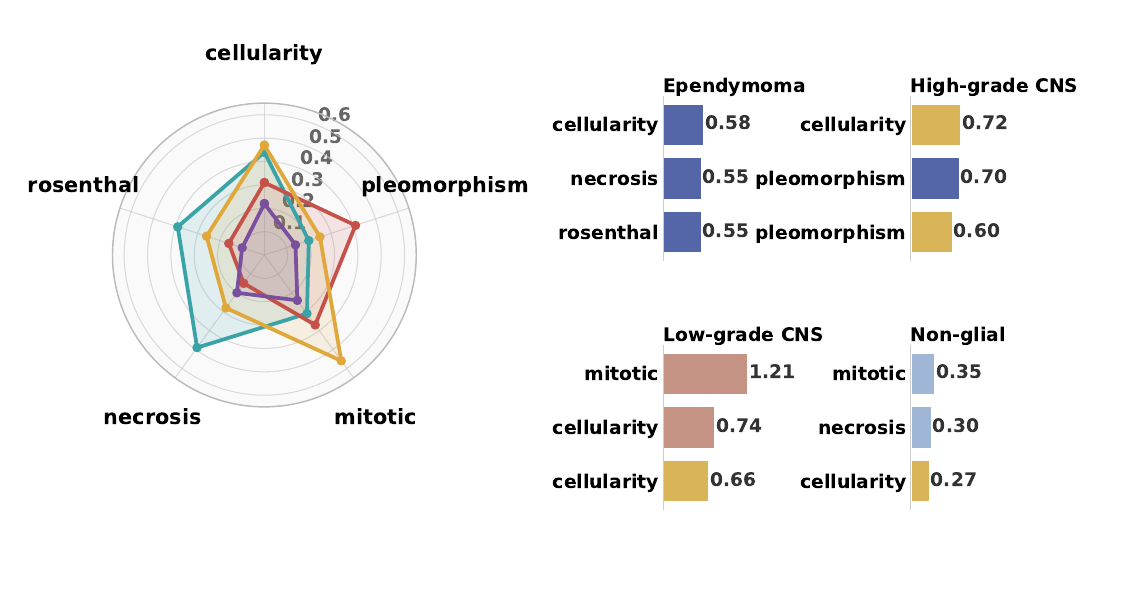}
  \caption{L1 morphology $\to$ stage-1 logit attribution.}
  \label{fig:l1_attr}
\end{subfigure}\hfill
\begin{subfigure}[t]{0.49\textwidth}
  \centering
  \includegraphics[width=\linewidth]{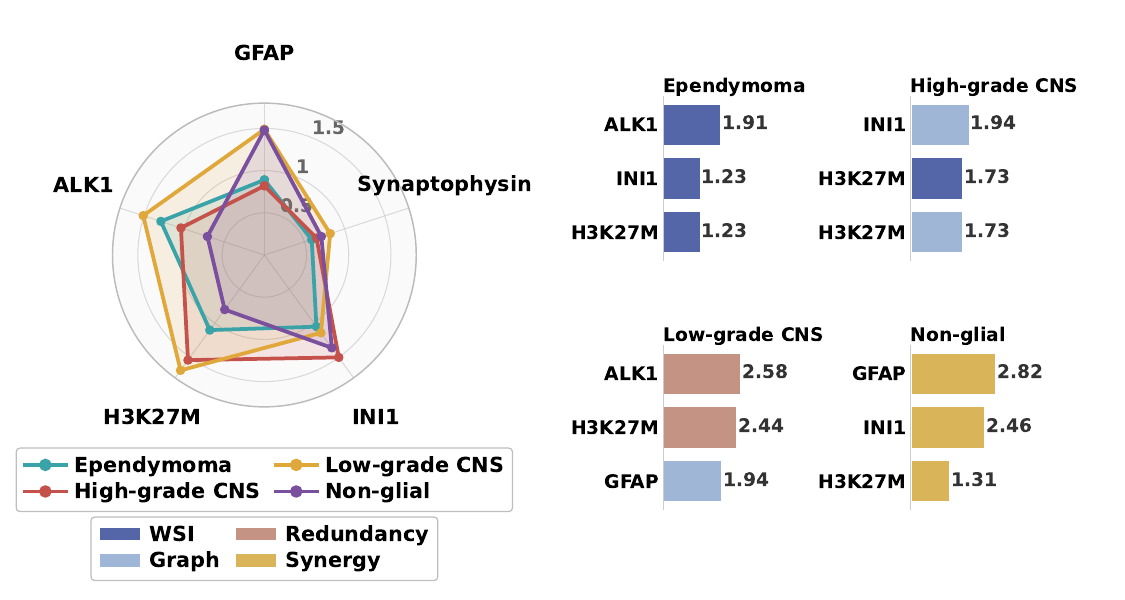}
  \caption{L2 biomarker $\to$ final logit attribution.}
  \label{fig:l2_attr}
\end{subfigure}

\caption{\textbf{Interpretability of \ours{} on PBT.}
\textbf{(a)} Per-expert concept AUROC (L1 morphology dark blue, L2 biomarker dark gold; dashed circle: chance).
\textbf{(b)} Per-class mean gate weights over the four experts (WSI, Graph, Redundancy, Synergy); $N$ next to class names.
\textbf{(c)} Overall routing distribution, marginalized over classes.
\textbf{(d, e)} Concept-to-logit attribution for L1 and L2. Spider plots: gate-aggregated contribution per class on each concept axis. Bar panels: top-3 (concept, expert) drivers per class, colored by dominant expert. Legend in (e) applies to both.}
\label{fig:interpretability}
\vspace{-0.6em}
\end{figure}

\textbf{Per-class expert specialization.} Routing in
Fig.~\ref{fig:per_class_gate} and \ref{fig:routing_dist} aligns with histopathological intuition:
\emph{Graph} dominates both glial classes (HG: 0.562, LG: 0.527), capturing
architectural cues at the cell-graph level; \emph{WSI} peaks for Ependymoma
(0.351), whose cytology is best resolved at tile resolution; \emph{Synergy}
specializes to Non-glial (0.289 vs.\ $0.08$--$0.15$ elsewhere, 2--3$\times$),
where tile cytology and graph organization provide complementary evidence;
and \emph{Redundancy} is most active for Low-grade CNS (0.215, $\sim$2$\times$
HG and Non-glial), matching the stereotyped histology of pilocytic
astrocytoma. Histological correlates per expert are in
Appendix~\ref{app:expert_specialization}.

\textbf{Hierarchical concept attribution.} Gradient$\times$input on the
stage-1 (L1) and final (L2) heads attributes each class logit to its driving
concepts; bar panels in Fig.~\ref{fig:l1_attr}, Fig.~\ref{fig:l2_attr} report the top-3
(concept, expert) pairs per class, colored by dominant expert. \emph{L1
morphology} recovers textbook cytology: pleomorphism and cellularity drive
HG (cellularity 0.72 Synergy; pleomorphism 0.70 WSI / 0.60 Synergy), the
anaplastic phenotype; mitotic and cellularity drive LG via Redundancy
(mitotic 1.21; cellularity 0.74 Redundancy / 0.66 Synergy), matching
pilocytic histology; Ependymoma routes cellularity, necrosis, and rosenthal
through WSI alone (0.58, 0.55, 0.55); and Non-glial L1 contributions are
uniformly small and Graph-routed (mitotic 0.35, necrosis 0.30, cellularity
0.27), indicating reliance on L2 over L1. \emph{L2 biomarkers} recover
molecular signatures: H3K27M dominates HG via \emph{both} modalities (INI1
1.94 Graph; H3K27M 1.73 WSI / 1.73 Graph), the DMG signature; Synergy
integrates GFAP-negative / INI1-loss evidence for Non-glial (GFAP 2.82, INI1
2.46, H3K27M 1.31), the ATRT workup; Redundancy pairs ALK1 (2.58) and H3K27M
(2.44) for LG alongside Graph-driven GFAP (1.94), using \emph{absence} of
high-grade markers as confirmatory evidence; and INI1 is used
\emph{bidirectionally}, supporting HG when retained (1.94) and Non-glial when
lost (2.46). \ours{} also learns negative evidence: Rosenthal fibers,
pathognomonic for pilocytic astrocytoma, receive the lowest L1 weights where
they should not apply (HG: 0.16; Non-glial: 0.10), with pleomorphism and
mitotic dominating the L1$\rightarrow$L2 path to H3K27M. Full attribution
maps and L1$\rightarrow$L2 details are in
Appendix~\ref{app:concept_attribution}.

\textbf{Inference-time reasoning trace.} Beyond what \ours{} learns during
training, at deployment it produces a per-slide three-part reasoning trace:
\textbf{(i)} the gate weights $\boldsymbol{\alpha}$ indicate which modalities
and interactions \ours{} uses; \textbf{(ii)} per-expert concept activations
expose the morphology and biomarker evidence behind the prediction;
\textbf{(iii)} comparing concept scores across experts diagnoses the
\emph{informational origin} of the decision, whether driven by shared
cross-modal consensus (Redundancy) or emergent multimodal interactions
(Synergy). Together these compose a clinically-readable justification for
the predicted class. Fig.~\ref{fig:per_expert_auroc} reveals a strikingly
non-uniform pattern: cellularity, mitotic, and ALK1 are reliably recovered
by Redundancy and Synergy experts (AUROC $0.76$--$0.86$) while unimodal WSI
and Graph experts sit near or below chance on the same concepts; conversely,
GFAP is recovered at AUROC $\geq 0.81$ in \emph{all four} experts, as
expected for a near-universal glial marker present in every modality view.
We attribute this non-uniformity to three factors (concept learning need not
be consistent across experts; high gate weight does not imply broad concept
coverage; and \ours{} routes supervision toward the cleaner cross-modal
signal); details are in Appendix~\ref{app:concept_auroc_analysis}.

\begin{figure}[ht]
\centering
\begin{minipage}[c]{0.135\textwidth}
\centering
\includegraphics[width=\textwidth]{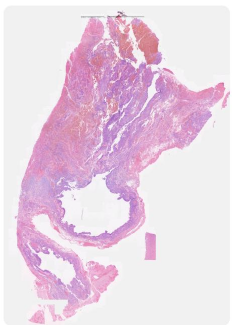}
\end{minipage}\hfill
\begin{minipage}[c]{0.865\textwidth}
\centering
\renewcommand{\arraystretch}{0.9}
\setlength{\tabcolsep}{2pt}
\fontsize{6.5pt}{7.8pt}\selectfont
\begin{tabular}{l c c p{6.4cm}}
\toprule
\multicolumn{4}{c}{\textbf{True: High-grade CNS \quad Predicted: High-grade CNS \checkmark}} \\
\midrule
\textbf{Concept (top-6)} & \textbf{Expected} & \textbf{Model} & \textbf{Clinical Rationale} \\
\midrule
pleomorphism  & high     & high (0.995)     & \multirow{6}{6.4cm}{\raggedright\textit{All six concepts inform H\&E assignment within high-grade CNS. INI1 retention and H3K27M negativity exclude two specific diagnoses; the remaining features keep this case in the high-grade CNS group.}} \\
mitotic       & high     & low (0.019)      & \\
INI1          & retained & retained (0.999) & \\
H3K27M        & --       & negative (0.000) & \\
cellularity   & high     & high (1.000)     & \\
necrosis      & high     & high (0.880)     & \\
\bottomrule
\end{tabular}
\end{minipage}
 \caption{\textbf{Reasoning trace for 
a high-grade CNS Example.}
Top-6 concepts driving \ours{}'s prediction with neuropathologist-expected status; 5/6 match.}
\label{fig:case_study}
\vspace{-0.6em}
\end{figure}

\textbf{Case study.} To evaluate the extent to which the concepts leveraged by \ours{} are consistent with neuropathological reasoning, we compare the six highest-contribution concepts for a correctly classified high-grade CNS slide with the diagnostic rationale provided by an independent, board-certified neuropathologist. (Fig.~\ref{fig:case_study}). Five of six (pleomorphism, INI1 retention, H3K27M-negative status, cellularity,
necrosis) match the pathologist's cited evidence in both identity and status;
Mitotic activity is the notable divergence (the report records six mitoses
per 10 high-power fields, yet the model assigned it low probability), but
the remaining evidence is sufficient to recover the correct class. The full
reasoning trace, gate weights selecting experts, morphology concepts
supplying cytologic evidence, biomarkers refining molecular subtype, is thus
inspectable in standard pathology vocabulary. Additional cases per class with
independent rationales are in Appendix~\ref{app:case_studies}.

\vspace{-0.4em}
\subsection{Training-Set Subsampling Analysis}
\label{sec:sample_efficiency}
\vspace{-0.4em}
To examine how the models behave when labeled training data are limited, we
repeat training under progressively smaller training sets while keeping the
same split protocol and hyperparameters. As shown in Fig.~\ref{fig:sample_efficiency},
\ours{} (full) achieves higher mean macro-F1 than PathMoE at the three reduced
training sizes. At $N=50$, the full model improves macro-F1 from $56.41$ to
$66.70$, with a smaller cross-split standard deviation ($11.73$ vs. $21.99$).
The advantage remains at $N=100$ ($80.02$ vs. $77.32$) and becomes more
pronounced at $N=150$ ($84.58$ vs. $72.70$). At the full training size
($N=164$), the gap narrows, and the bio-only variant obtains the highest mean
macro-F1 ($81.68$), with all three methods within a relatively small range.
These results suggest that concept supervision provides a useful inductive bias
in data-limited regimes. By encouraging the representation to align with
intermediate clinical concepts, the model may reduce reliance on purely
end-to-end label fitting when labels are scarce. This observation is consistent
with the concept-aligned localization view in
Theorem~\ref{thm:concept_localization}, while the crossover at the full training
size also indicates that the benefit of concept regularization is
data- and configuration-dependent rather than uniformly dominant.
The convergence curves in Fig.~\ref{fig:loss_curves} provide a complementary
optimization perspective. Under the same training configuration, both
concept-supervised variants reach low training cross-entropy earlier than
PathMoE. This trend is consistent with
Proposition~\ref{prop:aux_gradient_contraction}: when auxiliary concept losses
provide gradients aligned with the task objective, they can accelerate task-loss
contraction. We therefore interpret the subsampling and convergence results as evidence that hierarchical concept supervision can improve
limited-data behavior and optimization dynamics, while maintaining comparable performance.
\begin{figure}[htbp]
\centering
\begin{subfigure}[t]{0.245\textwidth}
  \centering
  \includegraphics[width=\linewidth]{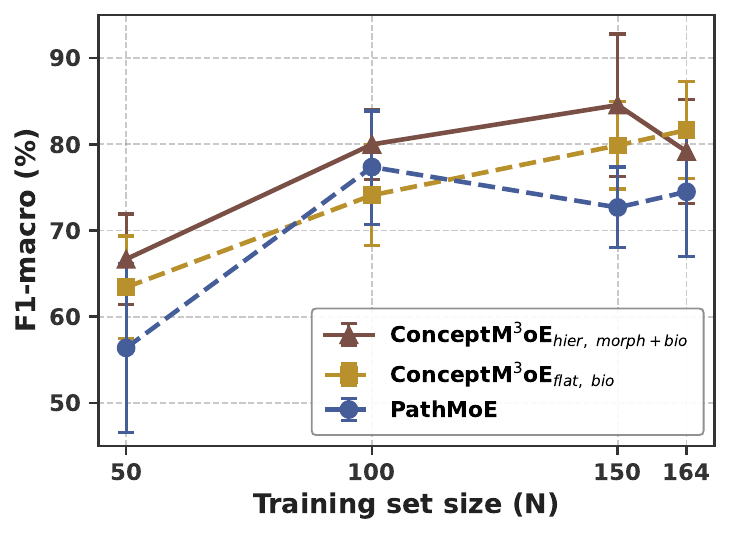}
  \caption{Sample efficiency.}
  \label{fig:sample_efficiency}
\end{subfigure}
\hfill
\begin{subfigure}[t]{0.245\textwidth}
  \centering
  \includegraphics[width=\linewidth]{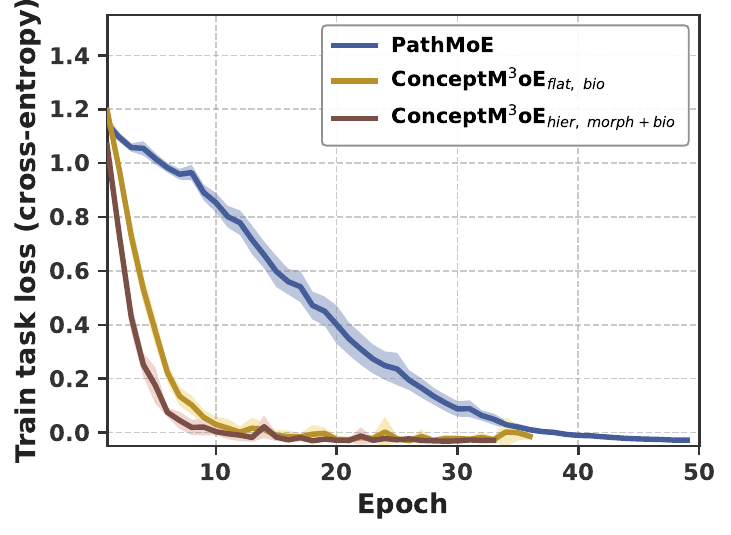}
  \caption{Train Loss Curve}
  \label{fig:loss_curves}
\end{subfigure}
\hfill
\begin{subfigure}[t]{0.245\textwidth}
  \centering
  \includegraphics[width=\linewidth]{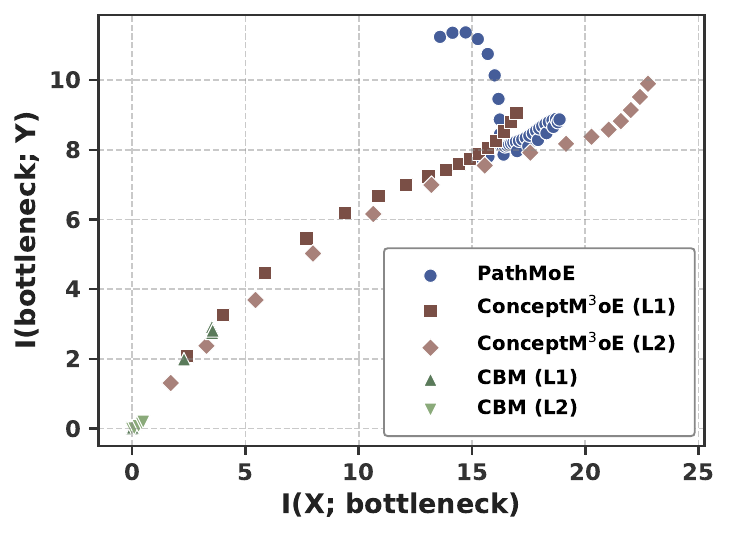}
  \caption{MI plane.}
  \label{fig:mi_plane}
\end{subfigure}
\hfill
\begin{subfigure}[t]{0.245\textwidth}
  \centering
  \includegraphics[width=\linewidth]{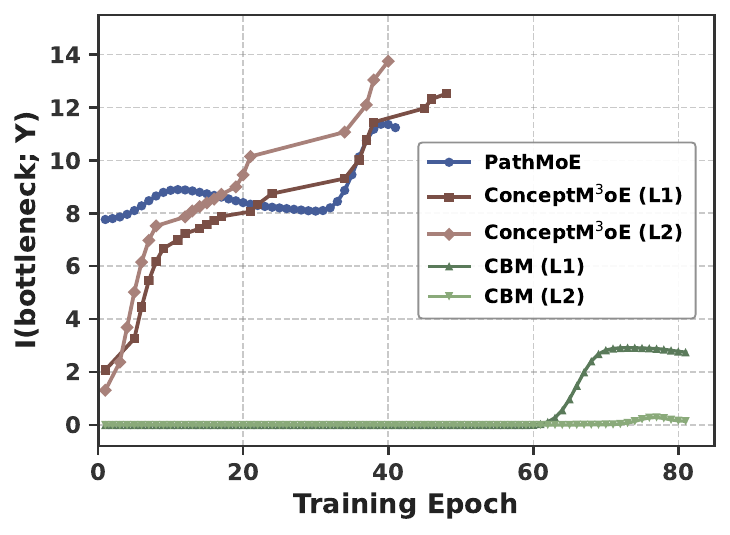}
  \caption{$I(C;Y)$ over epochs.}
  \label{fig:mi_curve}
\end{subfigure}
\caption{\textbf{Training analysis of \ours{} on PBT.}
\textbf{(a)} Macro-F1 vs.\ training-set size $N$, mean~$\pm$~SEM (Standard Error of the Mean) over 5 splits.
\textbf{(b)} Task cross-entropy loss at $N{=}164$, mean~$\pm$~std over 5 splits,
to epoch~50.
\textbf{(c)} Mutual information of bottleneck $C$ w.r.t.\ input $X$ and label
$Y$; marker size grows with epoch.
\textbf{(d)} Task information $I(C;Y)$ over epochs; CBM variants stay near $0$
(no class-relevant concept info), whereas \ours{} variants stay close to PathMoE.
In (c) and (d), PathMoE is the no-bottleneck baseline; \ours{}
and the CBM-\ours{} variant use (\textit{hier, morph+bio}) with supervision at L1
(morphology) or L2 (biomarker).}
\label{fig:training_and_info}
\vspace{-0.6em}
\end{figure}

\vspace{-0.4em}
\subsection{Mutual Information Analysis}
\label{sec:mi_analysis}
\vspace{-0.4em}
To explain why \ours{} matches PathMoE despite the concept bottleneck, we analyze them via the Information Plane framework~\cite{tishby2000information}, plotting bottleneck capacity $I(X;C)$ against task-relevant mutual information $I(C;Y)$ (details in Appendix~\ref{app:mi_details}). We compare three variants under identical 10-fold splits: (1) PathMoE latent features (no bottleneck, $d{=}768$); (2) our continuous \ours{}$_{\mathrm{hier,\,morph+bio}}$, tracking both its L1 (morphology, $d{=}80$) and L2 (biomarker, $d{=}80$) bottlenecks; and (3) the discrete CBM-\ours{} baseline. Fig.~\ref{fig:mi_plane} shows that continuous \ours{} bottlenecks begin at the origin and migrate toward PathMoE's upper-right region during training. At convergence, \ours{}$_{\mathrm{hier,\,morph+bio}}$ (L1) preserves task information $I(\hat{\mathbf{c}}_1;Y)$ comparable to PathMoE despite a ${\sim}10{\times}$ dimensionality reduction, while L2 slightly exceeds L1. In stark contrast, purely scalar CBM-\ours{} variants fail to preserve class-relevant information, lingering near zero throughout training. 

Along the training trajectory, \ours{}$_{\mathrm{hier,\,morph+bio}}$ (L2) accumulates higher capacity $I(X;C)$ than L1. This stems from our hierarchical architecture: L2 receives both raw encoder features and the L1 concept residual, expanding its effective information content rather than strictly constraining it. Corroborating this, Fig.~\ref{fig:mi_curve} shows $I(C;Y)$ rising in close correspondence with validation macro-F1, indicating MI consistently captures task-relevant structure. These findings offer an information-theoretic explanation for the performance parity observed in~\S\ref{sec:main_results}. When realized as learnable embeddings with interaction-aware routing, the concept bottleneck imposes negligible information loss relative to an unconstrained latent space. Its primary effect is to reorganize information into semantically interpretable axes rather than to restrict capacity.
\vspace{-0.5em}

%% file: conclusion.tex
\section{Discussion and Conclusion}
\label{sec:conclusion}
\vspace{-0.5em}

\ours{} shows that concept supervision can serve as a structural prior for multimodal pathology without sacrificing predictive performance. On pediatric brain tumor classification, the model achieves performance on par with strong multimodal baselines while producing concept-wise evidence profiles that align with neuropathological reasoning, including class-specific reliance on modality-specific and cross-modal experts. The sample-efficiency results further suggest that morphology and biomarker concepts provide useful inductive bias under limited data, and the mutual information analysis indicates that concept embeddings preserve task-relevant signal while reorganizing it into interpretable axes. These findings support concept-centered expert modeling as a practical direction for reliable computational pathology, where predictions can be assessed through both accuracy and the clinical evidence structure behind them. Future work should validate the framework across larger multi-institutional cohorts and study how to specify concept sets with appropriate coverage, granularity, and clinical relevance.

%% file: checklist.tex
\section*{NeurIPS Paper Checklist}

\begin{enumerate}

\item {\bf Claims}
    \item[] Question: Do the main claims made in the abstract and introduction accurately reflect the paper's contributions and scope?
    \item[] Answer: \answerYes{} 
    \item[] Justification: The abstract and introduction explicitly state the contributions (interpretable MoE, hierarchical concept bottleneck, sample efficiency), which are directly supported by the experimental results in Section \ref{sec:experiments}.
    \item[] Guidelines:
    \begin{itemize}
        \item The answer \answerNA{} means that the abstract and introduction do not include the claims made in the paper.
        \item The abstract and/or introduction should clearly state the claims made, including the contributions made in the paper and important assumptions and limitations. A \answerNo{} or \answerNA{} answer to this question will not be perceived well by the reviewers. 
        \item The claims made should match theoretical and experimental results, and reflect how much the results can be expected to generalize to other settings. 
        \item It is fine to include aspirational goals as motivation as long as it is clear that these goals are not attained by the paper. 
    \end{itemize}

\item {\bf Limitations}
    \item[] Question: Does the paper discuss the limitations of the work performed by the authors?
    \item[] Answer: \answerYes{} 
    \item[] Justification: Yes, limitations such as the need for larger multi-institutional cohorts, appropriate concept set specification, and risks of automation bias are discussed in Section \ref{sec:conclusion} (Discussion and Conclusion) and Appendix \ref{app:broader_impacts} (Broader Impacts).
    \item[] Guidelines:
    \begin{itemize}
        \item The answer \answerNA{} means that the paper has no limitation while the answer \answerNo{} means that the paper has limitations, but those are not discussed in the paper. 
        \item The authors are encouraged to create a separate ``Limitations'' section in their paper.
        \item The paper should point out any strong assumptions and how robust the results are to violations of these assumptions (e.g., independence assumptions, noiseless settings, model well-specification, asymptotic approximations only holding locally). The authors should reflect on how these assumptions might be violated in practice and what the implications would be.
        \item The authors should reflect on the scope of the claims made, e.g., if the approach was only tested on a few datasets or with a few runs. In general, empirical results often depend on implicit assumptions, which should be articulated.
        \item The authors should reflect on the factors that influence the performance of the approach. For example, a facial recognition algorithm may perform poorly when image resolution is low or images are taken in low lighting. Or a speech-to-text system might not be used reliably to provide closed captions for online lectures because it fails to handle technical jargon.
        \item The authors should discuss the computational efficiency of the proposed algorithms and how they scale with dataset size.
        \item If applicable, the authors should discuss possible limitations of their approach to address problems of privacy and fairness.
        \item While the authors might fear that complete honesty about limitations might be used by reviewers as grounds for rejection, a worse outcome might be that reviewers discover limitations that aren't acknowledged in the paper. The authors should use their best judgment and recognize that individual actions in favor of transparency play an important role in developing norms that preserve the integrity of the community. Reviewers will be specifically instructed to not penalize honesty concerning limitations.
    \end{itemize}

\item {\bf Theory assumptions and proofs}
    \item[] Question: For each theoretical result, does the paper provide the full set of assumptions and a complete (and correct) proof?
    \item[] Answer: \answerYes{} 
    \item[] Justification: Yes, assumptions for the concept-aligned localization theorem and the auxiliary-gradient contraction proposition are stated in Section \ref{sec:training_objective}, with full proofs provided in Appendices \ref{app:proof_soft_concept_augmentation}, \ref{app:proof_concept_localization}, and \ref{app:proof_aux_gradient_contraction}.
    \item[] Guidelines:
    \begin{itemize}
        \item The answer \answerNA{} means that the paper does not include theoretical results. 
        \item All the theorems, formulas, and proofs in the paper should be numbered and cross-referenced.
        \item All assumptions should be clearly stated or referenced in the statement of any theorems.
        \item The proofs can either appear in the main paper or the supplemental material, but if they appear in the supplemental material, the authors are encouraged to provide a short proof sketch to provide intuition. 
        \item Inversely, any informal proof provided in the core of the paper should be complemented by formal proofs provided in appendix or supplemental material.
        \item Theorems and Lemmas that the proof relies upon should be properly referenced. 
    \end{itemize}

    \item {\bf Experimental result reproducibility}
    \item[] Question: Does the paper fully disclose all the information needed to reproduce the main experimental results of the paper to the extent that it affects the main claims and/or conclusions of the paper (regardless of whether the code and data are provided or not)?
    \item[] Answer: \answerYes{} 
    \item[] Justification: Yes, the experimental setup, baselines, and evaluation protocol are detailed in Section \ref{sec:experiments}, with comprehensive hyperparameters, network dimensions, and training configurations provided in Appendix \ref{app:impl_details}.
    \item[] Guidelines:
    \begin{itemize}
        \item The answer \answerNA{} means that the paper does not include experiments.
        \item If the paper includes experiments, a \answerNo{} answer to this question will not be perceived well by the reviewers: Making the paper reproducible is important, regardless of whether the code and data are provided or not.
        \item If the contribution is a dataset and\slash or model, the authors should describe the steps taken to make their results reproducible or verifiable. 
        \item Depending on the contribution, reproducibility can be accomplished in various ways. For example, if the contribution is a novel architecture, describing the architecture fully might suffice, or if the contribution is a specific model and empirical evaluation, it may be necessary to either make it possible for others to replicate the model with the same dataset, or provide access to the model. In general. releasing code and data is often one good way to accomplish this, but reproducibility can also be provided via detailed instructions for how to replicate the results, access to a hosted model (e.g., in the case of a large language model), releasing of a model checkpoint, or other means that are appropriate to the research performed.
        \item While NeurIPS does not require releasing code, the conference does require all submissions to provide some reasonable avenue for reproducibility, which may depend on the nature of the contribution. For example
        \begin{enumerate}
            \item If the contribution is primarily a new algorithm, the paper should make it clear how to reproduce that algorithm.
            \item If the contribution is primarily a new model architecture, the paper should describe the architecture clearly and fully.
            \item If the contribution is a new model (e.g., a large language model), then there should either be a way to access this model for reproducing the results or a way to reproduce the model (e.g., with an open-source dataset or instructions for how to construct the dataset).
            \item We recognize that reproducibility may be tricky in some cases, in which case authors are welcome to describe the particular way they provide for reproducibility. In the case of closed-source models, it may be that access to the model is limited in some way (e.g., to registered users), but it should be possible for other researchers to have some path to reproducing or verifying the results.
        \end{enumerate}
    \end{itemize}

\item {\bf Open access to data and code}
    \item[] Question: Does the paper provide open access to the data and code, with sufficient instructions to faithfully reproduce the main experimental results, as described in supplemental material?
    \item[] Answer: \answerNo{} 
    \item[] Justification: The institutional pediatric brain tumor (PBT) dataset cannot be publicly released due to patient privacy and Institutional Review Board (IRB) restrictions. However, we evaluated on the public TCGA dataset and will provide training code upon acceptance.
    \item[] Guidelines:
    \begin{itemize}
        \item The answer \answerNA{} means that paper does not include experiments requiring code.
        \item Please see the NeurIPS code and data submission guidelines (\url{https://neurips.cc/public/guides/CodeSubmissionPolicy}) for more details.
        \item While we encourage the release of code and data, we understand that this might not be possible, so \answerNo{} is an acceptable answer. Papers cannot be rejected simply for not including code, unless this is central to the contribution (e.g., for a new open-source benchmark).
        \item The instructions should contain the exact command and environment needed to run to reproduce the results. See the NeurIPS code and data submission guidelines (\url{https://neurips.cc/public/guides/CodeSubmissionPolicy}) for more details.
        \item The authors should provide instructions on data access and preparation, including how to access the raw data, preprocessed data, intermediate data, and generated data, etc.
        \item The authors should provide scripts to reproduce all experimental results for the new proposed method and baselines. If only a subset of experiments are reproducible, they should state which ones are omitted from the script and why.
        \item At submission time, to preserve anonymity, the authors should release anonymized versions (if applicable).
        \item Providing as much information as possible in supplemental material (appended to the paper) is recommended, but including URLs to data and code is permitted.
    \end{itemize}

\item {\bf Experimental setting/details}
    \item[] Question: Does the paper specify all the training and test details (e.g., data splits, hyperparameters, how they were chosen, type of optimizer) necessary to understand the results?
    \item[] Answer: \answerYes{} 
    \item[] Justification: Yes, training details including optimizer, learning rate, batch size, epochs, early stopping criteria and specific hyperparameters for both datasets are described in Appendix \ref{app:impl_details}.
    \item[] Guidelines:
    \begin{itemize}
        \item The answer \answerNA{} means that the paper does not include experiments.
        \item The experimental setting should be presented in the core of the paper to a level of detail that is necessary to appreciate the results and make sense of them.
        \item The full details can be provided either with the code, in appendix, or as supplemental material.
    \end{itemize}

\item {\bf Experiment statistical significance}
    \item[] Question: Does the paper report error bars suitably and correctly defined or other appropriate information about the statistical significance of the experiments?
    \item[] Answer: \answerYes{} 
    \item[] Justification: Yes, standard deviations and standard errors over cross-validation splits are reported for the architectural ablations (Table \ref{tab:ablation}) and training-set subsampling analysis (Figure \ref{fig:sample_efficiency}) to support the main claims.
    \item[] Guidelines:
    \begin{itemize}
        \item The answer \answerNA{} means that the paper does not include experiments.
        \item The authors should answer \answerYes{} if the results are accompanied by error bars, confidence intervals, or statistical significance tests, at least for the experiments that support the main claims of the paper.
        \item The factors of variability that the error bars are capturing should be clearly stated (for example, train/test split, initialization, random drawing of some parameter, or overall run with given experimental conditions).
        \item The method for calculating the error bars should be explained (closed form formula, call to a library function, bootstrap, etc.)
        \item The assumptions made should be given (e.g., Normally distributed errors).
        \item It should be clear whether the error bar is the standard deviation or the standard error of the mean.
        \item It is OK to report 1-sigma error bars, but one should state it. The authors should preferably report a 2-sigma error bar than state that they have a 96\% CI, if the hypothesis of Normality of errors is not verified.
        \item For asymmetric distributions, the authors should be careful not to show in tables or figures symmetric error bars that would yield results that are out of range (e.g., negative error rates).
        \item If error bars are reported in tables or plots, the authors should explain in the text how they were calculated and reference the corresponding figures or tables in the text.
    \end{itemize}

\item {\bf Experiments compute resources}
    \item[] Question: For each experiment, does the paper provide sufficient information on the computer resources (type of compute workers, memory, time of execution) needed to reproduce the experiments?
    \item[] Answer: \answerYes{} 
    \item[] Justification: Yes, the compute infrastructure (a single NVIDIA RTX A6000 48 GB GPU) is explicitly documented in Appendix \ref{app:impl_details}.
    \item[] Guidelines:
    \begin{itemize}
        \item The answer \answerNA{} means that the paper does not include experiments.
        \item The paper should indicate the type of compute workers CPU or GPU, internal cluster, or cloud provider, including relevant memory and storage.
        \item The paper should provide the amount of compute required for each of the individual experimental runs as well as estimate the total compute. 
        \item The paper should disclose whether the full research project required more compute than the experiments reported in the paper (e.g., preliminary or failed experiments that didn't make it into the paper). 
    \end{itemize}
    
\item {\bf Code of ethics}
    \item[] Question: Does the research conducted in the paper conform, in every respect, with the NeurIPS Code of Ethics \url{https://neurips.cc/public/EthicsGuidelines}?
    \item[] Answer: \answerYes{} 
    \item[] Justification: Yes, all research conducted in this paper conforms with the NeurIPS Code of Ethics.
    \item[] Guidelines:
    \begin{itemize}
        \item The answer \answerNA{} means that the authors have not reviewed the NeurIPS Code of Ethics.
        \item If the authors answer \answerNo, they should explain the special circumstances that require a deviation from the Code of Ethics.
        \item The authors should make sure to preserve anonymity (e.g., if there is a special consideration due to laws or regulations in their jurisdiction).
    \end{itemize}

\item {\bf Broader impacts}
    \item[] Question: Does the paper discuss both potential positive societal impacts and negative societal impacts of the work performed?
    \item[] Answer: \answerYes{} 
    \item[] Justification: Yes, a dedicated "Broader Impacts" section is provided in Appendix \ref{app:broader_impacts}, discussing both positive implications (transparency in clinical AI) and potential negative risks (automation bias, representation bias) along with mitigations.
    \item[] Guidelines:
    \begin{itemize}
        \item The answer \answerNA{} means that there is no societal impact of the work performed.
        \item If the authors answer \answerNA{} or \answerNo, they should explain why their work has no societal impact or why the paper does not address societal impact.
        \item Examples of negative societal impacts include potential malicious or unintended uses (e.g., disinformation, generating fake profiles, surveillance), fairness considerations (e.g., deployment of technologies that could make decisions that unfairly impact specific groups), privacy considerations, and security considerations.
        \item The conference expects that many papers will be foundational research and not tied to particular applications, let alone deployments. However, if there is a direct path to any negative applications, the authors should point it out. For example, it is legitimate to point out that an improvement in the quality of generative models could be used to generate Deepfakes for disinformation. On the other hand, it is not needed to point out that a generic algorithm for optimizing neural networks could enable people to train models that generate Deepfakes faster.
        \item The authors should consider possible harms that could arise when the technology is being used as intended and functioning correctly, harms that could arise when the technology is being used as intended but gives incorrect results, and harms following from (intentional or unintentional) misuse of the technology.
        \item If there are negative societal impacts, the authors could also discuss possible mitigation strategies (e.g., gated release of models, providing defenses in addition to attacks, mechanisms for monitoring misuse, mechanisms to monitor how a system learns from feedback over time, improving the efficiency and accessibility of ML).
    \end{itemize}
    
\item {\bf Safeguards}
    \item[] Question: Does the paper describe safeguards that have been put in place for responsible release of data or models that have a high risk for misuse (e.g., pre-trained language models, image generators, or scraped datasets)?
    \item[] Answer: \answerNA{} 
    \item[] Justification: The predictive models developed in this work are highly specific to specialized histopathology classification and pose no high risk for general public misuse.
    \item[] Guidelines:
    \begin{itemize}
        \item The answer \answerNA{} means that the paper poses no such risks.
        \item Released models that have a high risk for misuse or dual-use should be released with necessary safeguards to allow for controlled use of the model, for example by requiring that users adhere to usage guidelines or restrictions to access the model or implementing safety filters. 
        \item Datasets that have been scraped from the Internet could pose safety risks. The authors should describe how they avoided releasing unsafe images.
        \item We recognize that providing effective safeguards is challenging, and many papers do not require this, but we encourage authors to take this into account and make a best faith effort.
    \end{itemize}

\item {\bf Licenses for existing assets}
    \item[] Question: Are the creators or original owners of assets (e.g., code, data, models), used in the paper, properly credited and are the license and terms of use explicitly mentioned and properly respected?
    \item[] Answer: \answerYes{} 
    \item[] Justification: Yes, existing assets such as the TCGA dataset, UNIv2 features, and base architectures are properly cited in the main text and Appendix \ref{app:impl_details}.
    \item[] Guidelines:
    \begin{itemize}
        \item The answer \answerNA{} means that the paper does not use existing assets.
        \item The authors should cite the original paper that produced the code package or dataset.
        \item The authors should state which version of the asset is used and, if possible, include a URL.
        \item The name of the license (e.g., CC-BY 4.0) should be included for each asset.
        \item For scraped data from a particular source (e.g., website), the copyright and terms of service of that source should be provided.
        \item If assets are released, the license, copyright information, and terms of use in the package should be provided. For popular datasets, \url{paperswithcode.com/datasets} has curated licenses for some datasets. Their licensing guide can help determine the license of a dataset.
        \item For existing datasets that are re-packaged, both the original license and the license of the derived asset (if it has changed) should be provided.
        \item If this information is not available online, the authors are encouraged to reach out to the asset's creators.
    \end{itemize}

\item {\bf New assets}
    \item[] Question: Are new assets introduced in the paper well documented and is the documentation provided alongside the assets?
    \item[] Answer: \answerNA{} 
    \item[] Justification: No new publicly releasable datasets or standalone foundation models are introduced as primary assets in this submission due to institutional privacy constraints.
    \item[] Guidelines:
    \begin{itemize}
        \item The answer \answerNA{} means that the paper does not release new assets.
        \item Researchers should communicate the details of the dataset\slash code\slash model as part of their submissions via structured templates. This includes details about training, license, limitations, etc. 
        \item The paper should discuss whether and how consent was obtained from people whose asset is used.
        \item At submission time, remember to anonymize your assets (if applicable). You can either create an anonymized URL or include an anonymized zip file.
    \end{itemize}

\item {\bf Crowdsourcing and research with human subjects}
    \item[] Question: For crowdsourcing experiments and research with human subjects, does the paper include the full text of instructions given to participants and screenshots, if applicable, as well as details about compensation (if any)? 
    \item[] Answer: \answerNA{} 
    \item[] Justification: The research relies exclusively on retrospective, de-identified clinical data and does not involve crowdsourcing or prospective human subjects experimentation.
    \item[] Guidelines:
    \begin{itemize}
        \item The answer \answerNA{} means that the paper does not involve crowdsourcing nor research with human subjects.
        \item Including this information in the supplemental material is fine, but if the main contribution of the paper involves human subjects, then as much detail as possible should be included in the main paper. 
        \item According to the NeurIPS Code of Ethics, workers involved in data collection, curation, or other labor should be paid at least the minimum wage in the country of the data collector. 
    \end{itemize}

\item {\bf Institutional review board (IRB) approvals or equivalent for research with human subjects}
    \item[] Question: Does the paper describe potential risks incurred by study participants, whether such risks were disclosed to the subjects, and whether Institutional Review Board (IRB) approvals (or an equivalent approval/review based on the requirements of your country or institution) were obtained?
    \item[] Answer: \answerYes{} 
    \item[] Justification: Yes, Appendix \ref{app:impl_details} explicitly states that the institutional retrospective PBT cohort was collected and utilized under an approved Institutional Review Board (IRB) protocol.
    \item[] Guidelines:
    \begin{itemize}
        \item The answer \answerNA{} means that the paper does not involve crowdsourcing nor research with human subjects.
        \item Depending on the country in which research is conducted, IRB approval (or equivalent) may be required for any human subjects research. If you obtained IRB approval, you should clearly state this in the paper. 
        \item We recognize that the procedures for this may vary significantly between institutions and locations, and we expect authors to adhere to the NeurIPS Code of Ethics and the guidelines for their institution. 
        \item For initial submissions, do not include any information that would break anonymity (if applicable), such as the institution conducting the review.
    \end{itemize}

\item {\bf Declaration of LLM usage}
    \item[] Question: Does the paper describe the usage of LLMs if it is an important, original, or non-standard component of the core methods in this research? Note that if the LLM is used only for writing, editing, or formatting purposes and does \emph{not} impact the core methodology, scientific rigor, or originality of the research, declaration is not required.
    \item[] Answer: \answerYes{} 
    \item[] Justification: Yes, the use of GPT-4 for extracting morphology concept targets from pathology text reports is explicitly described, and the exact prompt used is provided in Appendix \ref{app:concept}.
    \item[] Guidelines:
    \begin{itemize}
        \item The answer \answerNA{} means that the core method development in this research does not involve LLMs as any important, original, or non-standard components.
        \item Please refer to our LLM policy in the NeurIPS handbook for what should or should not be described.
    \end{itemize}

\end{enumerate}

%% file: appendix.tex

\appendix

\section{Extended Experimental Setup and Implementation Details}
\label{app:impl_details}
\vspace{-0.4em}
\subsection{Dataset Details and Preprocessing}
\textbf{PBT Dataset.} The institutional cohort was collected under IRB approval from Dell Children’s
Medical Center. While the initial raw cohort comprised 253 diagnostic H\&E WSIs (age $\le 29$ years), we categorized 199 cases with sufficient multimodal alignment into the four primary diagnostic classes used in our experiments. Patient-level splits were strictly enforced. Unmentioned concepts in the pathology reports are masked during the loss computation.

\textbf{TCGA Dataset.} Diagnostic H\&E WSIs from TCGA-GBM and TCGA-LGG were accessed via the Genomic Data Commons (GDC) ~\cite{heath2021nci, cancer2008comprehensive, cancer2015comprehensive}, filtered for patients aged $\le 29$ years to align with the pediatric/young-adult scope of our primary cohort. We categorized 55 cases with sufficient multimodal alignment into two primary diagnostic classes used in our experiments. Patient-level splits were strictly enforced. Unmentioned concepts in the pathology reports are masked during the loss computation.
\vspace{-0.4em}
\subsection{Training and Optimization}
\vspace{-0.4em}
Models are trained end-to-end using the Adam optimizer. For the primary PBT cohort, we utilize a learning rate of $2 \times 10^{-4}$, a batch size of $16$, and a dropout rate of $p=0.1$ for regularization (see Table~\ref{tab:hparams} for the dataset-specific adjustments applied to the TCGA cohort). Across all experiments, we apply a cosine annealing learning rate schedule. Models are trained for up to 150 epochs with an early stopping patience of 30 epochs, monitored on the validation macro-F1 score. The checkpoint with the highest validation macro-F1 is selected for final testing. 

Each cross-validation fold uses identical pre-defined patient-level splits (consistent with the PathMoE protocol, yielding ${\sim}20$ test slides per fold) and is initialized with a fixed random seed to ensure exact reproducibility. All experiments were conducted on a single NVIDIA RTX A6000 (48\,GB) GPU.
\vspace{-0.4em}
\subsection{Hyperparameters}
\label{sec:hparams}
\vspace{-0.4em}
Table~\ref{tab:hparams} details the complete set of architectural and optimization hyperparameters used for \ours{} across both the PBT and TCGA evaluation cohorts. While the core multimodal mixture-of-experts architecture and bottleneck dimensions remain consistent across tasks, dataset-specific regularization strategies are applied to account for differences in cohort scale and complexity. Specifically, for the external TCGA validation dataset, we employ a lighter GNN backbone (2 layers vs. 3 layers), increased dropout ($0.6$ vs. $0.1$), a reduced batch size, and heavier L1 concept regularization ($\lambda_1=0.9$) without L2 hierarchy ($\lambda_2=0.0$) to mitigate potential overfitting.

\begin{table}[htbp]
\vspace{-0.8em}
\centering
\caption{\ours{} architectural and optimization hyperparameters. Values shared across both datasets are centered.}
\label{tab:hparams}
\small
\begin{tabular}{@{}lcc@{}}
\toprule
\textbf{Hyperparameter} & \textbf{PBT (4-class)} & \textbf{TCGA (2-class)} \\
\midrule
\multicolumn{3}{@{}l}{\textit{Architecture Details:}} \\
Hidden / token dim $d$ & \multicolumn{2}{c}{$256$} \\
Fusion tokens $P$ & \multicolumn{2}{c}{$16$} \\
Experts / routing mechanism & \multicolumn{2}{c}{$4$ / dense softmax} \\
Expert MLP $\cdot$ Fusion $\cdot$ Predictor layers & \multicolumn{2}{c}{$2 \cdot 1 \cdot 1$} \\
Concept embed dim $m$ & \multicolumn{2}{c}{$16$} \\
Layer-1 residual dim & \multicolumn{2}{c}{$16$} \\
GNN layers / hidden / dropout & $3$ / $256$ / $0.1$ & $2$ / $128$ / $0.5$ \\
\midrule
\multicolumn{3}{@{}l}{\textit{Optimization Details:}} \\
$\lambda_1,\,\lambda_2$ (Concept loss weights) & $0.5,\,0.3$ & $0.9,\,0.0$ \\
$\lambda_{\text{int}}$ (Interaction loss weight) & $0.1$ & $10^{-2}$ \\
Optimizer / learning rate & Adam / $2 \times 10^{-4}$ & Adam / $10^{-4}$ \\
Batch size / model dropout & $16$ / $0.1$ & $8$ / $0.6$ \\
Epochs / patience / schedule & \multicolumn{2}{c}{$150$ / $30$ / cosine annealing} \\
\bottomrule
\end{tabular}
\vspace{-0.8em}
\end{table}

\vspace{-0.4em}
\section{Detailed expert specialization analysis}
\label{app:expert_specialization}
\vspace{-0.4em}
We expand the histopathological correlates behind each expert's
specialization in Sec.~\ref{sec:interpretability}. The \emph{Graph}
expert's dominance on glial classes captures architectural features such as
perivascular pseudorosettes, microvascular proliferation, and infiltrative
growth, all well represented at the cell-graph level. The \emph{WSI}
expert's peak for Ependymoma reflects its distinctive cytology, pseudorosettes
with bland nuclei and canal-like structures, resolved at tile resolution.
The \emph{Synergy} specialization to Non-glial corresponds to ATRT and
embryonal tumors, where tile-level cytology and graph-level organization
provide complementary, non-redundant evidence that neither modality alone
resolves. The \emph{Redundancy} expert's preference for Low-grade CNS
matches the highly stereotyped histology of pilocytic astrocytoma, where
modalities tend to agree.
\vspace{-0.4em}
\section{Per-class concept attribution maps}
\label{app:concept_attribution}
\vspace{-0.4em}
We provide complementary detail to the per-class attribution maps in
Fig.~\ref{fig:l1_attr} and ~\ref{fig:l2_attr}, focusing on gate-aggregated values
and within-expert structure not covered by the top-3 bar panels in the
main text.

\textbf{L1 morphology.}
At the gate-aggregated level, pleomorphism is the top L1 contributor for
High-grade CNS (0.41), and is localized to the WSI expert (0.70) as a
per-cell cytologic feature best resolved at tile resolution. The Synergy
expert's two largest L1 weights, cellularity (0.72) and pleomorphism
(0.60), both for High-grade CNS, reproduce the ``more cells, abnormally
shaped'' combination pathologists use jointly to grade gliomas. The
Rosenthal-fiber down-weighting noted in the main text shows \ours{} has
learned \emph{where a concept should not be used}, an objective as
informative as positive evidence.

\textbf{L2 biomarkers.}
At the gate-aggregated level, H3K27M is the top L2 contributor for
High-grade CNS (1.54), recovered at similar magnitude by both WSI (1.73)
and Graph (1.73): both modalities access the diffuse-midline-glioma signal.
INI1 is second for High-grade CNS (1.50) and a top contributor for
Non-glial (1.36), the gate-aggregated counterpart to the per-expert
bidirectional pattern in the main text. Within the Redundancy expert, the
three largest L2 weights for Low-grade CNS, ALK1 (2.58), H3K27M (2.44),
and INI1 (1.93), are all high-grade-associated markers, reinforcing the
absence-as-evidence interpretation.

\textbf{L1$\rightarrow$L2 routing.}
Full L1$\rightarrow$L2 attribution maps and computational details for the
gate-aggregated and per-expert attribution scores are provided in
Appendix~\ref{app:attribution_computation}.

\vspace{-0.4em}
\section{Per-expert concept-AUROC analysis}
\label{app:concept_auroc_analysis}
\vspace{-0.4em}
Three observations explain the non-uniform per-expert AUROC pattern noted
in Sec.~\ref{sec:interpretability}.

\textbf{(1) Concept learning need not be consistent across experts.}
By construction, each \ours{} expert is trained on a perturbation that
isolates a different interaction type and is explicitly encouraged to
capture mutually exclusive information; a concept will be recovered most
strongly wherever its discriminative signal aligns best with that expert's
input view.

\textbf{(2) High gate weight does not imply broad concept coverage.}
The Graph expert receives the largest aggregate gate weight
(Fig.~\ref{fig:interpretability}(c)) yet learns relatively few concepts
at high AUROC, because classification and concept supervision are distinct
objectives: Graph acts as \ours{}'s diagnostic ``heavy lifter,'' while
Redundancy and Synergy function as specialized tools that activate only
when cross-modal evidence is needed.

\textbf{(3) Routing toward the cleaner signal.}
Morphology concepts like cellularity and mitotic activity are partially
visible in raw tiles, but \ours{} identifies that their cleaner signal is
fundamentally cross-modal and routes the supervision accordingly. GFAP is
the informative counter-example: as a near-universal glial marker, its
signal is present in every modality view, which is why all four experts
recover it.

\vspace{-0.4em}
\section{Concept attribution computation}
\label{app:attribution_computation}
\vspace{-0.4em}

For each test sample we compute the per-concept contribution to the class logit, both within each expert and after gating, then average within class and across folds. The full per-class L1 morphology and L2 biomarker attribution maps are shown as panels (d) and (e) of Fig.~\ref{fig:interpretability} in the main text.

We use \textbf{gradient $\times$ input} attribution to quantify each concept's contribution to a downstream output. For a scalar output $y$ and a $d$-dimensional concept embedding $\mathbf{c} \in \mathbb{R}^{d}$, the contribution of $\mathbf{c}$ to $y$ on a single sample is
\begin{equation}
\phi(\mathbf{c} \to y)
\;=\; \left| \sum_{i=1}^{d} \frac{\partial y}{\partial c_i} \, c_i \right|.
\label{eq:gradinput}
\end{equation}
This captures both how sensitive $y$ is to $\mathbf{c}$ (gradient) and what is actually present in the sample (input value); a concept the model weights heavily but whose embedding is near zero contributes little, and vice versa. Each L1 morphology and L2 biomarker concept is represented by a $16$-dimensional embedding, so the inner sum runs over $d{=}16$.

\textbf{Forward paths.}
We instantiate Eq.~\ref{eq:gradinput} for three forward paths in
\ours{}:
\begin{itemize}[leftmargin=1.2em,itemsep=0pt,topsep=2pt]
  \item \textbf{L1 $\to$ class.} The L1 bottleneck $\mathbf{b}_1 \in
        \mathbb{R}^{80}$ (concatenation of five L1 concept embeddings)
        is fed to the stage-1 head, producing class logits
        $\mathbf{s}^{(1)} \in \mathbb{R}^{4}$. We attribute the class-$c$
        logit $s^{(1)}_c$ to each L1 concept via Eq.~\ref{eq:gradinput}
        applied to the corresponding 16-d slice of $\mathbf{b}_1$.
  \item \textbf{L2 $\to$ class.} The final head receives
        $[\mathrm{softmax}(\mathbf{s}^{(1)}); \mathbf{b}_2]$, where
        $\mathbf{b}_2$ is the L2 bottleneck. We attribute the final
        class-$c$ logit to each L2 concept by gradients with respect to
        the appropriate slice of $\mathbf{b}_2$.
  \item \textbf{L1 $\to$ L2.} Each L2 concept embedding is produced by
        a block whose input concatenates the expert latent $\mathbf{z}$
        with $\mathbf{b}_1$, and combines a predictive scorer path with
        a residual term. We report attribution against the squared-norm
        of the L2 embedding $\|\mathbf{c}^{(2)}_{j}\|^{2}$, since this
        is the quantity that flows downstream.
\end{itemize}

For each path we obtain one attribution number per (sample, expert, source-concept, target). We aggregate as follows: (i)~restrict to test samples whose ground-truth label equals the target class~$c$; (ii)~average within each fold to obtain a (expert, source-concept) matrix; (iii)~report the across-fold mean and standard deviation, excluding fold/class pairs with no samples. Two views are produced from the same per-fold matrices: a \emph{per-expert} view that retains the expert dimension, and a \emph{gate-aggregated} view obtained by weighting each expert by its sample-level gate weight $\boldsymbol{\alpha}$ before averaging.

For L1$\to$L2 we differentiate the pre-sigmoid scorer score rather than the post-sigmoid probability, since the latter saturates for confidently predicted concepts and produces unstable gradients. Attribution magnitudes are not comparable \emph{across} the three paths because their target functions differ in scale; only within-row (which concept matters most for a given class) and within-column (which class a concept is most informative for) comparisons are meaningful. Finally, gradient $\times$ input is a local linearization at the operating point of the trained model and does not measure counterfactual concept removal, which would require intervention experiments left to future work.
\vspace{-0.4em}
\section{Concept Vocabulary and Extraction Pipeline}
\label{app:concept}
\vspace{-0.4em}
This appendix describes the two-level concept hierarchy used in \ours{}, the extraction protocol for obtaining concept targets from clinical data, and the encoding scheme used during training.
\vspace{-0.4em}
\subsection{Concept Vocabulary}
\vspace{-0.4em}
We define a two-level concept hierarchy. The first level captures histopathological findings described in pathology reports; the second level captures immunohistochemical (IHC) and molecular markers obtained from clinical assays.
\vspace{-0.4em}
\subsubsection{Level 1: Morphology Concepts ($K_1=5$)}
\vspace{-0.4em}
Each concept is scored by a large language model on an ordinal scale derived from the descriptive language used in pathology reports. The five concepts and their category sets are:\vspace{4pt}

\begin{center}
\begin{tabular}{@{}lll@{}}
\toprule
\textbf{Concept} & \textbf{Ordinal categories (low $\to$ high)} & \textbf{Target dim} \\
\midrule
Cellularity      & absent, low, mild\_moderate, high                         & 4 \\
Pleomorphism     & absent, mild, marked                                    & 3 \\
Mitotic activity & absent, rare, present\_not\_increased, increased, high  & 5 \\
Necrosis         & absent, focal, definite, pseudopalisading               & 4 \\
Rosenthal fibers & absent, focal, abundant                                 & 3 \\
\bottomrule
\end{tabular}
\end{center}

Each concept is represented as a one-hot vector over its categories during training, yielding $4+3+5+4+3 = 19$ binary targets. When a report does not mention a concept, the entry is marked \texttt{not\_mentioned} and all categories for that concept are masked in the loss.
\vspace{-0.4em}
\subsubsection{Level 2: Biomarker Concepts ($K_2=5$)}
\vspace{-0.4em}
These are binary (positive/negative) markers obtained directly from clinical IHC staining and, where available, DNA methylation profiling:

\begin{center}
\begin{tabular}{@{}lll@{}}
\toprule
\textbf{Concept} & \textbf{Source} & \textbf{Target} \\
\midrule
GFAP          & IHC & POS / NEG \\
Synaptophysin & IHC & POS / NEG \\
INI1          & IHC & POS / NEG \\
H3K27M        & IHC & POS / NEG \\
ALK1          & IHC & POS / NEG \\
\bottomrule
\end{tabular}
\end{center}

Each biomarker is a single binary target. When a specific IHC stain was not performed for a given case, that concept is masked in the loss.
\vspace{-0.4em}
\subsection{Extraction Pipeline}
\vspace{-0.4em}
\subsubsection{Level 1: Morphology Concepts (\texorpdfstring{$K_1=5$}{K1=5})}

Free-text pathology reports for all 199 cases in the PBT dataset were processed using GPT-4~\cite{achiam2023gpt}. The prompt instructed the model to read each report and assign an ordinal category (from the predefined vocabulary above) for each of the five L1 concepts. A structured JSON output schema was enforced to ensure consistent formatting. The prompt template is reproduced below:

\begin{quote}\small\ttfamily
You are a pathologist assistant. Read the following pathology report
and score each morphological feature on the scale provided. If a
feature is not mentioned, respond with "not\_mentioned".

Report: \{report\_text\}

Concepts:
- cellularity: absent / low / mild\_moderate / high
- pleomorphism: absent / mild / marked
- mitotic: absent / rare / present\_\allowbreak not\_\allowbreak increased / increased / high
- necrosis: absent / focal / definite / pseudopalisading
- rosenthal: absent / focal / abundant

Return a JSON object with one key per concept.
\end{quote}

The extracted labels were manually reviewed for a random subset of 20 reports. Inter-rater agreement with a board-certified pathologist exceeded 85\% for cellularity, necrosis, and vascular proliferation; pleomorphism and mitotic activity showed moderate agreement (70--75\%), consistent with the inherent subjectivity of these features in pediatric brain tumor histology.
\vspace{-0.4em}
\subsubsection{Level 2: Biomarker Concepts (\texorpdfstring{$K_2=5$}{K2=5})}
\vspace{-0.4em}
IHC staining results and methylation classifications were extracted from the structured fields of our institution's pathology database. These are categorical clinical variables (POS / NEG / not performed) requiring no language model extraction. Methylation-based tumor subtyping (e.g., ZFTA-RELA fused ependymoma) was recorded where available but not used as a direct concept target; it served only to validate diagnostic labels.
\vspace{-0.4em}
\subsection{Target Encoding and Loss Masking}
\vspace{-0.4em}
For training, L1 categorical labels are one-hot encoded per concept. For a concept with $V$ ordinal categories, the target $\mathbf{t}_{k} \in \{0,1\}^{V}$ is supervised with binary cross-entropy:

\[
\mathcal{L}_{\mathrm{concept}}^{\mathrm{L1}}
= \frac{1}{\sum_k m_k}
\sum_{k=1}^{K_1} m_k \cdot
\frac{1}{V_k} \sum_{v=1}^{V_k}
\mathrm{BCE}(p_{k,v}, t_{k,v}),
\]

where $m_k = 0$ if the concept was marked \texttt{not\_mentioned} and $m_k = 1$ otherwise. L2 biomarkers are single binary targets ($V_k=1$) supervised identically with masked BCE.
\vspace{-0.4em}
\section{Case Study}
\label{app:case_studies}
\vspace{-0.4em}
We extend the case study of Sec.~\ref{sec:interpretability} to the
remaining three tumor classes. For each class, we select one
correctly-classified slide and report the concepts contributing most to
\ours{}'s prediction, alongside their textbook-expected status, the
model's predicted status, and the diagnostic rationale provided by an
independent board-certified neuropathologist
(Figures~\ref{tab:case_lg}--\ref{tab:case_ng}). Across all three cases,
the pathologist's reasoning closely tracks the concepts the model
relied on, supporting the claim that \ours{} produces clinically
faithful, not merely accurate, predictions.

\begin{figure}[ht]
\centering
\begin{minipage}[c]{0.25\textwidth}
\centering
\includegraphics[width=\textwidth]{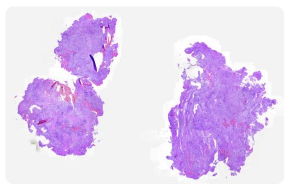}\\
{\scriptsize WSI\_000214}
\end{minipage}\hfill
\begin{minipage}[c]{0.73\textwidth}
\centering
\renewcommand{\arraystretch}{1.25}
\setlength{\tabcolsep}{4pt}
\footnotesize
\begin{tabular}{l|c|c|p{4.8cm}}
\toprule
\multicolumn{4}{c}{\textbf{True: Low-grade CNS \quad Predicted: Low-grade CNS \checkmark}} \\
\midrule
\textbf{Concept} & \textbf{Expected} & \textbf{Model} & \textbf{Clinical Rationale} \\
\midrule
Rosenthal     & high & high (0.611) & \multirow{6}{4.8cm}{\raggedright \textit{All six concepts align with the expert feature set for assigning low-grade CNS. GFAP positivity, low cellularity, low likelihood of necrosis, and low pleomorphism are classic low-grade features. Rosenthal fibers are non-specific but helpful within the low-grade category.}} \\
GFAP          & high & high (0.997) & \\
cellularity   & low  & low  (0.426) & \\
mitotic       & low  & low  (0.127) & \\
necrosis      & low  & low  (0.069) & \\
pleomorphism  & low  & low  (0.363) & \\
\bottomrule
\end{tabular}
\end{minipage}
\caption{\textbf{Low-grade CNS reasoning trace}.
Clinical rationale provided by an independent neuropathologist.}
\label{tab:case_lg}
\end{figure}

\begin{figure}[ht]
\centering
\begin{minipage}[c]{0.25\textwidth}
\centering
\includegraphics[width=\textwidth]{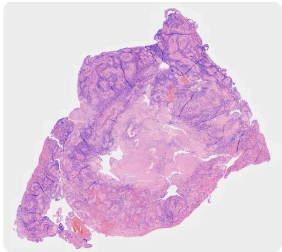}\\
{\scriptsize WSI\_000009}
\end{minipage}\hfill
\begin{minipage}[c]{0.73\textwidth}
\centering
\renewcommand{\arraystretch}{1.25}
\setlength{\tabcolsep}{4pt}
\footnotesize
\begin{tabular}{l|c|c|p{4.8cm}}
\toprule
\multicolumn{4}{c}{\textbf{True: Ependymoma \quad Predicted: Ependymoma \checkmark}} \\
\midrule
\textbf{Concept} & \textbf{Expected} & \textbf{Model} & \textbf{Clinical Rationale} \\
\midrule
Synaptophysin & high & low  (0.023) & \multirow{6}{4.8cm}{\raggedright \textit{All six concepts are reasonable, but from a human perspective do not, on their own, justify assignment to the ependymoma class. This class is largely decided by experts based on tumor morphology, growth pattern, GFAP positivity, retained INI1 protein, and cellular density.}} \\
INI1          & high & high (0.988) & \\
GFAP          & high & high (0.996) & \\
H3K27M        & low  & low  (0.013) & \\
ALK1          & low  & low  (0.077) & \\
Rosenthal     & low  & high (0.576) & \\
\bottomrule
\end{tabular}
\end{minipage}
\caption{\textbf{Ependymoma reasoning trace}.
Clinical rationale provided by an independent neuropathologist.}
\label{tab:case_epn}
\end{figure}

\begin{figure}[ht]
\centering
\begin{minipage}[c]{0.25\textwidth}
\centering
\includegraphics[width=\textwidth]{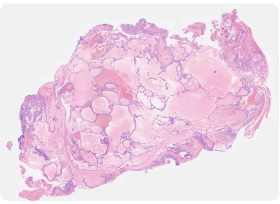}\\
{\scriptsize WSI\_000076}
\end{minipage}\hfill
\begin{minipage}[c]{0.73\textwidth}
\centering
\renewcommand{\arraystretch}{1.7}
\setlength{\tabcolsep}{4pt}
\footnotesize
\begin{tabular}{l|c|c|p{5.0cm}}
\toprule
\multicolumn{4}{c}{\textbf{True: Non-glial \quad Predicted: Non-glial \checkmark}} \\
\midrule
\textbf{Concept} & \textbf{Expected} & \textbf{Model} & \textbf{Clinical Rationale} \\
\midrule
Rosenthal     & low & low (0.398) & \multirow{5}{5.0cm}{\raggedright \scriptsize\itshape All five concepts are pertinent. The most important features are absent GFAP and Synaptophysin, which determine glial vs.\ neuronal CNS tumors; when both are negative, the differential narrows to high-grade CNS or non-glial. The remaining features are helpful but non-specific---experts also rely on tumor morphology, cellular density, and growth pattern.} \\
GFAP          & low & low (0.032) & \\
Synaptophysin & low & low (0.068) & \\
mitotic       & low & low (0.363) & \\
necrosis      & low & low (0.267) & \\
\bottomrule
\end{tabular}
\end{minipage}
\caption{\textbf{Non-glial reasoning trace}.
Clinical rationale provided by an independent neuropathologist.}
\label{tab:case_ng}
\end{figure}
\vspace{-0.4em}

\section{Ablation Study}
\label{app:ablation}
\vspace{-0.4em}
\subsection{Configuration Definitions}
\vspace{-0.4em}
To rigorously evaluate the architectural components of \ours{}, we define the following configuration terminology used in our ablation study (Table~\ref{tab:ablation}):
\begin{itemize}[leftmargin=*, nosep]
    \item \textbf{Hierarchy (\texttt{flat} vs. \texttt{hier}):} \texttt{flat} uses a single-level concept bottleneck, treating concepts independently without hierarchical conditioning. \texttt{hier} employs a serial two-level hierarchy where higher-level biomarkers condition on lower-level morphology. 
    \item \textbf{Bottleneck Strictness (\texttt{standard} vs. \texttt{-hard}):} The standard variants retain the residual connection ($\gamma_{\mathrm{res}}=1$), acting as a soft bottleneck. The \texttt{-hard} variants disable the residual pathway ($\gamma_{\mathrm{res}}=0$), forcing a strict information bottleneck. Because strict bottlenecks are harder to optimize and highly sensitive to class imbalance, all \texttt{-hard} variants employ a heavy regularization recipe consisting of balanced binary cross-entropy and a weighted data sampler.
    \item \textbf{Supervision Weights:} Standard runs use $(\lambda_1=0.5, \lambda_2=0.3)$. The \texttt{low reg.} variant scales this down to $(0.2, 0.2)$, while \texttt{high reg.} applies a much stricter penalty of $(1.0, 0.6)$.
\end{itemize}

\begin{table}[htbp]
\caption{Classification results of \ours{} variants on PBT (10-fold CV, mean $\pm$ std).}
\label{tab:ablation}
\centering
\footnotesize
\begin{tabular}{@{}ll c ll@{}}
\toprule
Model Variant & Modalities & Residual & Accuracy & F1-macro \\
\midrule
\multicolumn{5}{@{}l}{\textbf{Architecture \& Hierarchy}} \\
flat (morph) & WH & \cmark & 83.29 $\pm$ 9.40 & 75.17 $\pm$ 13.45 \\
flat-hard (morph) & WH & \xmark & 83.24 $\pm$ 7.55 & \textbf{77.32} $\pm$ 11.05 \\
flat (morph+bio) & WH & \cmark & 82.74 $\pm$ 6.55 & 74.85 $\pm$ 12.28 \\
flat-hard (morph+bio) & WH & \xmark & 81.76 $\pm$ 6.38 & 75.34 $\pm$ 11.07 \\
hier (morph+bio) & WH & \cmark & \textbf{83.71} $\pm$ 6.91 & 76.88 $\pm$ 12.40 \\
hier-hard (morph+bio) & WH & \xmark & 83.21 $\pm$ 5.81 & 73.64 $\pm$ 12.47 \\
\midrule
\multicolumn{5}{@{}l}{\textbf{Supervision Weights (Applied to hier-hard)}} \\
hier-hard (low reg.) & WH & \xmark & 83.71 $\pm$ 5.27 & 75.05 $\pm$ 12.03 \\
hier-hard (high reg.) & WH & \xmark & 82.21 $\pm$ 7.69 & 73.48 $\pm$ 15.09 \\
\midrule
\multicolumn{5}{@{}l}{\textbf{Modality Ablations}} \\
hier (WSI-only) & W & \cmark & 82.24 $\pm$ 7.63 & 73.82 $\pm$ 14.84 \\
flat-hard (bio) & WH & \xmark & 83.74 $\pm$ 6.78 & 74.92 $\pm$ 14.28 \\
flat (bio) & WH & \cmark & 82.74 $\pm$ 6.49 & 72.21 $\pm$ 15.32 \\
\bottomrule
\end{tabular}
\end{table}

\noindent\textbf{Results and Analysis.} 
Table~\ref{tab:ablation} presents the ablation study. We observe three key findings:

\textbf{Hierarchical vs. Flat Concept Architecture.} When utilizing both morphology and biomarker concepts, organizing them in a serial hierarchy (\texttt{hier (morph+bio)}) outperforms both the flat fusion strategy (\texttt{flat (morph+bio)}) and the biomarker-only baseline (\texttt{flat-hard (bio)}). \texttt{hier (morph+bio)} achieves the highest overall accuracy ($83.71\%$) and a strong macro-F1 ($76.88\%$), validating that explicitly modeling the clinical dependency, where higher-level biomarkers condition on fundamental tissue morphology, yields better discriminative representations.

\textbf{The Crucial Role of Residual Pathways.} For a single-level morphology bottleneck, removing the residual and applying the stabilized hard-bottleneck recipe (\texttt{flat-hard (morph)}) acts as a strong regularizer, yielding the highest macro-F1 ($77.32\%$). However, in the two-level hierarchical setting, the strict bottleneck (\texttt{hier-hard (morph+bio)}) leads to a noticeable performance drop (F1 drops from $76.88\%$ to $73.64\%$). This confirms that as the concept hierarchy deepens, residual connections become essential to prevent severe information loss, maintaining high performance without requiring complex class-balancing tricks.

\textbf{Modality and Supervision Trade-offs.} Removing the graph modality (\texttt{hier (WSI-only)}) degrades the macro-F1 to $73.82\%$, reinforcing the necessity of multimodal fusion. Furthermore, tuning the concept loss weight is critical: applying overly strict concept supervision weights (\texttt{hier-hard (high reg.)}) severely harms task performance (F1 drops to $73.48\%$), highlighting the need for balanced concept regularization.
\vspace{-0.4em}

\section{Training-Set Subsampling Protocol}
\label{app:sample_efficiency_protocol}
\vspace{-0.4em}
\textbf{Training subsets.}
We construct training subsets at four sizes: $N \in \{50, 100, 150, 164\}$, where $N{=}164$ is the full training pool (the same pool used for one fold of the main 10-fold cross-validation). For each $N$, we draw $5$ random subsets of size $N$ from the training pool, yielding $5$ splits per training size and $20$ training runs per model in total. The validation and test sets are held fixed across training sizes within a split, so only the training-set size varies; this isolates the data-efficiency effect from changes in evaluation distribution.

\textbf{Models and modalities.}
We compare three models: \ours{}~(full), the two-layer concept-bottleneck variant with five L1 morphology and five L2 biomarker concepts; \ours{}~(bio-only), the single-layer ablation that retains only the five L2 biomarker concepts; and PathMoE~\citep{yu2026pathmoe}, the I2MoE-Transformer backbone without concept supervision. The two \ours{} variants use the WSI + Graph (WH) modality pair, while PathMoE uses the WSI + Graph + Text (WHT) triplet, this matches the configuration used for each method's best run on the full data, so our sample-efficiency comparison reflects each model's intended deployment regime rather than an artificially matched input.

\textbf{Training.}
All three models share the optimizer and schedule: AdamW at learning rate $1\mathrm{e}{-4}$, batch size $16$, $150$ epochs, dropout $0.5$, hidden dimension $256$, $16$ patches per slide. We report the test-set metric corresponding to the checkpoint with highest validation macro-F1 within the run. For each $(\text{model}, N)$ pair we report mean and standard deviation across the $5$ splits.

\vspace{-0.4em}
\section{Mutual Information Plane Details}
\label{app:mi_details}
\vspace{-0.4em}
This appendix details the mutual information estimation protocol used to produce the information plane analysis in Fig.~\ref{fig:mi_plane} and \ref{fig:mi_curve}. We describe the models compared, the estimation methodology, and the trajectory post-processing steps.
\vspace{-0.4em}
\subsection{Information Plane Framework}
\vspace{-0.4em}
The information plane~\cite{tishby2000information} visualizes the trade-off between compression and task-relevant information in learned representations. For a bottleneck representation $C$ derived from input $X$ and a target label $Y$, the plane plots:
\begin{itemize}
  \item \textbf{X-axis}: $I(X; C)$, the mutual information between the input and the bottleneck: measuring how much information about the input is preserved (capacity).
  \item \textbf{Y-axis}: $I(C; Y)$, the mutual information between the bottleneck and the label: measuring how much task-relevant information is retained.
\end{itemize}
A good concept bottleneck should compress the raw features into interpretable concepts (lower $I(X; C)$) while preserving task-relevant signal (high $I(C; Y)$). The trajectory of these quantities across training epochs reveals how the model learns to balance these objectives.
\vspace{-0.4em}
\subsection{Models and Bottlenecks Compared}
\vspace{-0.4em}
We compare three model variants on the same information plane, all trained on the WSI + graph (WH) modality combination with 10-fold cross-validation:

\begin{table}[htbp]
\caption{Summary of model representations and tracked features compared in the Information Plane analysis.}
\label{tab:mi_models}
\centering
\renewcommand{\arraystretch}{1.3}
\resizebox{\linewidth}{!}{%
\begin{tabular}{@{} l l c p{6.5cm} @{}}
\toprule
\textbf{Representation} & \textbf{Tracked Feature} & \textbf{Dim} & \textbf{Description} \\
\midrule
PathMoE & $\mathbf{z}$ (expert latent) & $768$ & Mean-pooled expert outputs; no bottleneck (upper reference). \\
\ours{} L1 & $\hat{\mathbf{c}}_1$ (concept embedding) & $5 {\times} 16 = 80$ & Morphology concept embeddings, mean over 4 experts. \\
\ours{} L2 & $\hat{\mathbf{c}}_2$ (concept embedding) & $5 {\times} 16 = 80$ & Biomarker concept embeddings, serial L1$\to$L2 design. \\
CBM-\ours{} L1 & $\mathbf{p}_1$ (scalar probability) & $19$ & Scalar concept probabilities (5 concepts, one-hot encoded). \\
CBM-\ours{} L2 & $\mathbf{p}_2$ (scalar probability) & $10$ & Scalar biomarker probabilities (5 binary concepts). \\
\bottomrule
\end{tabular}%
}
\end{table}

The CBM variant replaces the concept embedding layer with a standard scalar concept bottleneck~\cite{koh2020concept}, where only the predicted probability $p_k$ (without the embedding $\hat{\mathbf{c}}_k$) is passed to the classifier. This serves as a baseline to quantify the information retained by the embedding representation versus the scalar bottleneck.
\vspace{-0.4em}
\subsection{MI Estimation: Gaussian Parametric Estimator}
\vspace{-0.4em}
We estimate mutual information using a Gaussian parametric estimator with PCA pre-reduction. This choice balances stability under limited sample sizes ($N \approx 200$ samples per epoch across all 10 splits) with computational tractability.
\vspace{-0.4em}
\subsubsection{Estimator Definition}
\vspace{-0.4em}
For bottleneck features $C \in \mathbb{R}^{N \times d}$ and labels $Y$, the Gaussian parametric MI is:
\begin{equation}
  I(C; Y)
  = H(C) - \sum_{\ell} p(Y\!=\!\ell) \cdot H(C \mid Y\!=\!\ell),
  \label{eq:gaussian_mi}
\end{equation}

where $H(C) = \frac{1}{2}\log\big((2\pi e)^d \det(\Sigma)\big)$ is the differential entropy of a Gaussian with covariance $\Sigma$, and $H(C \mid Y\!=\!\ell)$ is the class-conditional entropy estimated from the covariance $\Sigma_{\ell}$ of samples belonging to class $\ell$. Both covariances are computed with bias correction (dividing by $N$ rather than $N-1$) for consistency with the maximum-likelihood Gaussian fit.
\vspace{-0.4em}
\subsubsection{PCA Pre-reduction and Regularization}
\vspace{-0.4em}
Direct covariance estimation in the raw feature space is unstable when $d$ is large relative to $N$. We apply PCA to reduce dimensionality before MI computation:

\begin{equation}
  k = \min\!\big(20,\; d,\; \lfloor N/4 \rfloor\big).
  \label{eq:pca_k}
\end{equation}

To handle linearly dependent dimensions (e.g., probability vectors summing to 1 in the CBM bottleneck), we further detect the \textit{effective rank}: the number of PCA eigenvalues exceeding $10\varepsilon$, where $\varepsilon = 10^{-3}$ is the Tikhonov regularization parameter. The final PCA dimension is $k' = \min(k,\, \text{effective\_rank})$, with a floor of $k' \ge 2$.

All covariance matrices are regularized with $\Sigma_{\text{reg}} = \Sigma + \varepsilon \mathbf{I}$. If the regularized log-determinant is still non-positive, the entropy is clamped to 0. The X-axis quantity $I(X; C)$ is upper-bounded by $H(C)$, since $H(C \mid X) = 0$ for deterministic concept predictors.

All models were trained for 150 epochs with early stopping (patience 30), batch size 16, learning rate $10^{-4}$, and dump interval 1 (every epoch). Dumps from all 10 splits are pooled per epoch for MI computation, yielding approximately $N=200$ samples per epoch.
\vspace{-0.4em}
\subsection{Trajectory Post-processing}
\vspace{-0.4em}
\subsubsection{Truncation}
\vspace{-0.4em}
To focus on the learning phase and remove post-convergence noise, trajectories are truncated at the epoch of peak $I(C; Y)$ plus a 3-epoch buffer. For CEM variants, an additional monotonic filter retains only points where \textit{both} $H(C)$ and $I(C; Y)$ are strictly increasing relative to the previous retained point, suppressing training zigzag while preserving the dominant upward trend. The CBM variant is not truncated (its $I(C; Y)$ peaks at epoch 78, and truncation would conceal the fact that the scalar bottleneck requires approximately 65 epochs to produce any class-relevant concept signal).
\vspace{-0.4em}
\subsubsection{Smoothing}
\vspace{-0.4em}
Gaussian filter smoothing ($\sigma=2.0$ epochs) is applied to both $H(C)$ and $I(C; Y)$ trajectories before visualization. Epoch-proportional marker sizes follow the convention of~\cite{espinosa2022concept}.
\vspace{-0.4em}
\subsection{Key Observations}
\vspace{-0.4em}
The information plane analysis (Fig.~\ref{fig:mi_plane} and \ref{fig:mi_curve}) yields three main findings:
\begin{enumerate}
  \item \textbf{CEM embeddings preserve task information comparable to raw latents.} The converged $I(\hat{\mathbf{c}}; Y)$ of \ours{} L1 and L2 (12.3--14.2 nats) is comparable to or slightly exceeds that of PathMoE's raw expert latents (11.6 nats). This confirms that the concept embedding layer, with residual pathways, does not act as a lossy bottleneck.

  \item \textbf{Scalar bottlenecks lose substantial information.} The CBM variant achieves at most 3.3 nats (L1) and 0.7 nats (L2) of label-relevant information, a 70--95\% reduction relative to \ours{}. This quantifies the cost of forcing the classifier to rely solely on scalar concept probabilities.

  \item \textbf{Scalar bottlenecks learn class-relevant concepts slowly.} CBM $I(\mathbf{p}_1; Y)$ remains at zero for the first 64 epochs, indicating that the scalar concept predictor learns to produce confident concept predictions long before those predictions become class-discriminative. This slow emergence of task-relevant concept structure is invisible to standard validation metrics (which plateau much earlier) and highlights a hidden training dynamic revealed by the information plane.
\end{enumerate}

These findings provide quantitative support for the architectural choices in \ours{}: concept embeddings with residual pathways preserve the flexibility of unconstrained multimodal fusion while organizing the representation around clinically interpretable axes.


\vspace{-0.4em}
\section{Additional Method Details and Proofs}
\label{app:method_details}
\vspace{-0.4em}
\subsection{Concept annotation and masking details}
\label{app:concept_annotation}
\vspace{-0.4em}
The method uses two levels of clinical concepts. For loss notation,
$\mathcal{K}_1=\{1,\ldots,K_1\}$ indexes morphology concepts and
$\mathcal{K}_2=\{1,\ldots,K_2\}$ indexes biomarker concepts within their own
levels. When global concept indices are used in the architecture, biomarker
concept $j\in\mathcal{K}_2$ corresponds to global index $K_1+j$.

Morphology concepts are extracted from pathology reports and represented as
soft-ordinal targets $t^1_{n,k}\in[0,1]$. A target is treated as observed when
the report provides explicit evidence about the concept; unmentioned concepts
are assigned mask $m^1_{n,k}=0$ and do not contribute to the concept loss.
Biomarker concepts are obtained from available protein-level or molecular
biomarker annotations and represented as binary targets
$t^2_{n,j}\in\{0,1\}$, with mask $m^2_{n,j}=1$ only when the corresponding
annotation is available. The level-specific concept loss in
Eq.~\eqref{eq:concept_loss} applies the soft-ordinal loss $\ell_1$ to
morphology targets and binary cross-entropy $\ell_2$ to biomarker targets. The
same masks are used for all experts, so missing concepts neither penalize nor
reward any expert pathway.
\vspace{-0.4em}
\subsection{Encoder and interaction-specialization details}
\label{app:interaction_details}
\vspace{-0.4em}
For image inputs, patch-level features are aggregated into an image-level
representation with gated-attention multiple-instance learning~\cite{lu2021data}.
For graph-structured inputs, node features are encoded by a message-passing
network such as GraphSAGE~\cite{hamilton2017inductive} and pooled into a
graph-level representation. All modality embeddings are projected to the common
dimension $d$ before entering the expert layers.

For the interaction-specialization loss in Eq.~\eqref{eq:interaction_loss},
modality $m$ is perturbed as
\begin{equation*}
  \tilde{\mathbf{e}}_m
  =
  \mathbf{e}_m+\boldsymbol{\epsilon},
  \qquad
  \boldsymbol{\epsilon}\sim\mathcal{N}(\mathbf{0},\sigma^2\mathbf{I}).
  \label{eq:app_modality_perturbation}
\end{equation*}
Let $\mathbf{p}_e^{\mathrm{clean}}$ denote the expert prediction with the
original embeddings, and let $\mathbf{p}_e^{m}$ denote the prediction after
perturbing modality $m$. The sign pattern is
\begin{equation*}
  s_e^m
  =
  \begin{cases}
  -1, & e=U_m,\\
  -1, & e=S,\\
  +1, & e=R,\\
  +1, & e=U_j,\ j\ne m.
  \end{cases}
  \label{eq:app_interaction_signs}
\end{equation*}
Thus the uniqueness expert $U_m$ is encouraged to be sensitive to modality
$m$ and invariant to the other modalities, the redundancy expert $R$ is
encouraged to be invariant to single-modality perturbations, and the synergy
expert $S$ is encouraged to be sensitive to all modalities. The averaged
interaction loss is
\begin{equation*}
  \mathcal{L}_{\mathrm{int}}
  =
  \frac{1}{E}\sum_{e\in\mathcal{E}}\mathcal{L}_{\mathrm{int}}^e .
  \label{eq:app_interaction_average}
\end{equation*}
\vspace{-0.4em}
\subsection{Hierarchical concept block details}
\label{app:concept_hierarchy_details}
\vspace{-0.4em}
The main text defines the first-level concept block from the expert latent
$\mathbf{z}_e$. For morphology concept $k\le K_1$, the positive state,
negative state, scalar activation, and concept embedding are computed as in
Eqs.~\eqref{eq:positive_negative_concepts}--\eqref{eq:soft_concept_embedding}.
The first-level embeddings are collected in $\mathbf{B}_{e,1}$.

For biomarker concept $j\in\mathcal{K}_2$, the second-level input is the
concatenated representation defined as $\mathbf{z}_{e,2}=[\mathbf{z}_e;\mathbf{B}_{e,1}]$. The biomarker concept block
uses the same positive-negative embedding construction, with projections defined
on the second-level input:
\begin{equation*}
  \mathbf{c}_{e,K_1+j}^{+}
  =
  \mathrm{LeakyReLU}(\phi_{e,K_1+j}^{+}(\mathbf{z}_{e,2})),
  \qquad
  \mathbf{c}_{e,K_1+j}^{-}
  =
  \mathrm{LeakyReLU}(\phi_{e,K_1+j}^{-}(\mathbf{z}_{e,2})),
  \label{eq:app_l2_positive_negative}
\end{equation*}
\begin{equation*}
  p_{e,K_1+j}
  =
  \sigma\!\left(
    s_{e,K_1+j}([\mathbf{c}_{e,K_1+j}^{+};\mathbf{c}_{e,K_1+j}^{-}])
  \right),
  \label{eq:app_l2_activation}
\end{equation*}
\begin{equation*}
  \hat{\mathbf{c}}_{e,K_1+j}
  =
  p_{e,K_1+j}\mathbf{c}_{e,K_1+j}^{+}
  +(1-p_{e,K_1+j})\mathbf{c}_{e,K_1+j}^{-}
  +\gamma_{\mathrm{res}}\,\psi_{e,K_1+j}(\mathbf{z}_{e,2}).
  \label{eq:app_l2_embedding}
\end{equation*}
The second-level embeddings are then collected in $\mathbf{B}_{e,2}$ as in
Eq.~\eqref{eq:l2_concepts}. If a concept level is unavailable for a cohort, the
corresponding block and loss are removed by setting the associated $K_q$ to
zero.
\vspace{-0.4em}
\subsection{Concept-wise attribution details}
\label{app:attribution_details}
\vspace{-0.4em}
For analyses that summarize expert-pathway support, let $\alpha_e$ denote the
sample-specific expert weight used to combine expert predictions. If an
implementation uses uniform expert averaging, set $\alpha_e=1/E$. For concept
$k$, the class-agnostic pathway evidence profile is
\begin{equation*}
  a_{e,k}
  =
  \frac{\alpha_e p_{e,k}}
  {\sum_{e'\in\mathcal{E}}\alpha_{e'}p_{e',k}+\varepsilon},
  \qquad e\in\mathcal{E}.
  \label{eq:app_concept_evidence_profile}
\end{equation*}
This profile decomposes the predicted activation of concept $k$ across
uniqueness, redundancy, and synergy expert pathways.

For class-specific explanations, let $c$ be the predicted class or a
user-specified class, and let $R_e$ denote the representation passed to the
expert head $g_e$. Let $R_e^{\neg k}$ be the
same representation with $\hat{\mathbf{c}}_{e,k}$ replaced by a neutral concept
state. The neutral state is fixed across samples, such as the negative concept
embedding or the empirical mean concept embedding. The logit-ablation score is
\begin{equation*}
  \Delta_{e,k}^{(c)}
  =
  \alpha_e
  \left[
    g_{e,c}(R_e)-g_{e,c}(R_e^{\neg k})
  \right],
  \label{eq:class_specific_attribution}
\end{equation*}
where $g_{e,c}$ is the $c$-th logit of expert $e$. Positive values indicate
supporting evidence for class $c$, and negative values indicate suppressive
evidence. For visualization, positive evidence can be normalized across
pathways:
\begin{equation*}
  \bar{\Delta}_{e,k}^{(c)}
  =
  \frac{\max(\Delta_{e,k}^{(c)},0)}
  {\sum_{e'\in\mathcal{E}}\max(\Delta_{e',k}^{(c)},0)+\varepsilon}.
  \label{eq:app_normalized_class_attribution}
\end{equation*}
The score $a_{e,k}$ measures which pathway activates a concept, while
$\Delta_{e,k}^{(c)}$ measures how the pathway-specific concept changes the
diagnostic logit.
\vspace{-0.4em}
\subsection{Proof of Proposition~\ref{prop:soft_concept_augmentation}}
\label{app:proof_soft_concept_augmentation}
\vspace{-0.4em}
\begin{proof}
For a fixed expert, $R_{e,1}$ contains $R_{e,0}=\mathbf{z}_e$ as a coordinate
subvector, and $R_{e,2}$ contains $R_{e,1}$ as a coordinate subvector. Hence the
sigma-algebra generated by $R_{e,0}$ is contained in that generated by
$R_{e,1}$, which is contained in that generated by $R_{e,2}$. By the chain rule
for mutual information,
\begin{equation*}
  I(R_{e,1};Y)
  =
  I(R_{e,0};Y)
  +
  I([\mathbf{B}_{e,1};\mathbf{r}_{e,1}];Y\mid R_{e,0})
  \ge
  I(R_{e,0};Y),
\end{equation*}
and similarly
\begin{equation*}
  I(R_{e,2};Y)
  =
  I(R_{e,1};Y)
  +
  I(\mathbf{B}_{e,2};Y\mid R_{e,1})
  \ge
  I(R_{e,1};Y).
\end{equation*}
This proves the mutual-information ordering.

For unrestricted population log-loss,
\begin{equation*}
  \mathcal{R}_{\log}^{\star}(R)
  =
  \inf_q\mathbb{E}[-\log q(Y\mid R)]
  =
  H(Y\mid R)
  =
  H(Y)-I(R;Y),
\end{equation*}
where the optimum is attained by the true conditional distribution
$q(y\mid R)=\mathbb{P}(Y=y\mid R)$. The mutual-information ordering therefore
implies
\begin{equation*}
  \mathcal{R}_{\log}^{\star}(R_{e,2})
  \le
  \mathcal{R}_{\log}^{\star}(R_{e,1})
  \le
  \mathcal{R}_{\log}^{\star}(R_{e,0}).
\end{equation*}
If the appended concept and residual coordinates are deterministic functions of
$\mathbf{z}_e$ and $\mathbf{z}_e$ is retained, then $R_{e,j}$ and
$\mathbf{z}_e$ generate the same sigma-algebra. Thus
$I(R_{e,j};Y)=I(\mathbf{z}_e;Y)$ for $j\in\{0,1,2\}$.

For the finite-capacity predictor classes, any head operating on $R_{e,0}$ can
be represented as a head operating on $R_{e,1}$ by setting the weights on the
appended coordinates to zero. The same argument embeds any head on $R_{e,1}$
into the head class on $R_{e,2}$. Therefore
$\mathcal{F}_0\subseteq\mathcal{F}_1\subseteq\mathcal{F}_2$, and taking the
infimum over a larger class cannot increase empirical classification loss:
\begin{equation*}
  \inf_{f\in\mathcal{F}_2}\widehat{\mathcal{L}}_{\mathrm{cls}}(f)
  \le
  \inf_{f\in\mathcal{F}_1}\widehat{\mathcal{L}}_{\mathrm{cls}}(f)
  \le
  \inf_{f\in\mathcal{F}_0}\widehat{\mathcal{L}}_{\mathrm{cls}}(f).
\end{equation*}
\end{proof}
\vspace{-0.4em}
\subsection{Proof of Theorem~\ref{thm:concept_localization}}
\label{app:proof_concept_localization}
\vspace{-0.4em}
\begin{proof}
Let $\mathcal{G}_{\tau}=\ell_{\mathrm{cls}}\circ\mathcal{H}_{\tau}$. By the
standard Rademacher uniform convergence bound for bounded losses, with
probability at least $1-\delta$,
\begin{equation}
  \sup_{h\in\mathcal{H}_{\tau}}
  \left|
  \mathcal{R}_{\mathrm{cls}}(h)
  -
  \widehat{\mathcal{R}}_{\mathrm{cls}}(h)
  \right|
  \le
  2\mathfrak{R}_n(\mathcal{G}_{\tau})
  +
  B_{\ell}\sqrt{\frac{\log(2/\delta)}{2n}}.
  \label{eq:app_uniform_convergence}
\end{equation}
Let the right-hand side be $\Delta_{\tau}$. For any
$h\in\mathcal{H}_{\tau}$, empirical optimality of $\hat h_{\tau}$ gives
\begin{equation*}
  \widehat{\mathcal{R}}_{\mathrm{cls}}(\hat h_{\tau})
  \le
  \widehat{\mathcal{R}}_{\mathrm{cls}}(h).
\end{equation*}
Using Eq.~\eqref{eq:app_uniform_convergence} on $\hat h_{\tau}$ and on the
comparator $h$ yields
\begin{equation*}
  \mathcal{R}_{\mathrm{cls}}(\hat h_{\tau})
  \le
  \widehat{\mathcal{R}}_{\mathrm{cls}}(\hat h_{\tau})+\Delta_{\tau}
  \le
  \widehat{\mathcal{R}}_{\mathrm{cls}}(h)+\Delta_{\tau}
  \le
  \mathcal{R}_{\mathrm{cls}}(h)+2\Delta_{\tau}.
\end{equation*}
Taking the infimum over $h\in\mathcal{H}_{\tau}$ gives
\begin{equation*}
  \mathcal{R}_{\mathrm{cls}}(\hat h_{\tau})
  \le
  \inf_{h\in\mathcal{H}_{\tau}}\mathcal{R}_{\mathrm{cls}}(h)
  +
  4\mathfrak{R}_n(\ell_{\mathrm{cls}}\circ\mathcal{H}_{\tau})
  +
  2B_{\ell}\sqrt{\frac{\log(2/\delta)}{2n}}.
\end{equation*}
Since
$\mathcal{H}_{\tau}\subseteq\mathcal{H}$, monotonicity of Rademacher complexity
implies
\begin{equation*}
  \mathfrak{R}_n(\ell_{\mathrm{cls}}\circ\mathcal{H}_{\tau})
  \le
  \mathfrak{R}_n(\ell_{\mathrm{cls}}\circ\mathcal{H}).
\end{equation*}
Substituting the near-sufficiency condition from
Theorem~\ref{thm:concept_localization} gives the stated finite-sample
comparison.
\end{proof}
\vspace{-0.4em}
\subsection{Auxiliary-gradient contraction result}
\label{app:proof_aux_gradient_contraction}
\vspace{-0.4em}
The localization result describes the statistical role of concept supervision.
The next proposition gives the complementary optimization statement used in
Section~\ref{sec:training_objective}.

\begin{proposition}[Aligned auxiliary gradients improve task-loss contraction]
\label{prop:aux_gradient_contraction}
Let
\begin{equation*}
  \mathcal{L}_a(\theta)
  =
  \lambda_1\mathcal{L}_{\mathrm{concept}}^{\mathrm{L1}}(\theta)
  +
  \lambda_2\mathcal{L}_{\mathrm{concept}}^{\mathrm{L2}}(\theta)
  +
  \lambda_{\mathrm{int}}\mathcal{L}_{\mathrm{int}}(\theta),
  \label{eq:app_auxiliary_loss}
\end{equation*}
and define
$g_y=\nabla\mathcal{L}_{\mathrm{cls}}(\theta)$ and
$g_a=\nabla\mathcal{L}_{a}(\theta)$. Consider the update
$\theta^+=\theta-\eta(g_y+g_a)$. Assume that along the training trajectory:
(i) $\mathcal{L}_{\mathrm{cls}}$ is $\beta$-smooth; (ii)
$\mathcal{L}_{\mathrm{cls}}$ satisfies the Polyak--\L{}ojasiewicz (PL)
condition
\begin{equation}
  \|g_y\|^2
  \ge
  2\mu\bigl(\mathcal{L}_{\mathrm{cls}}(\theta)-\mathcal{L}_{\mathrm{cls}}^{\star}\bigr);
  \label{eq:app_pl_condition}
\end{equation}
(iii) the auxiliary gradient is task-aligned,
\begin{equation*}
  \langle g_y,g_a\rangle
  \ge
  \rho\|g_y\|^2,
  \qquad \rho\ge0;
  \label{eq:app_gradient_alignment}
\end{equation*}
(iv) the joint gradient norm is controlled,
\begin{equation*}
  \|g_y+g_a\|^2
  \le
  B_{\nabla}\|g_y\|^2;
  \label{eq:app_gradient_norm_control}
\end{equation*}
and (v) $1+\rho-\beta\eta B_{\nabla}/2>0$. Then
\begin{equation}
  \mathcal{L}_{\mathrm{cls}}(\theta^+)-\mathcal{L}_{\mathrm{cls}}^{\star}
  \le
  \left[
  1
  -
  2\eta\mu
  \left(
  1+\rho-
  \frac{\beta\eta B_{\nabla}}{2}
  \right)
  \right]
  \bigl(\mathcal{L}_{\mathrm{cls}}(\theta)-\mathcal{L}_{\mathrm{cls}}^{\star}\bigr).
  \label{eq:aux_contraction_bound}
\end{equation}
Moreover, this contraction is sharper than the task-only update whenever
\begin{equation}
  \rho
  >
  \frac{\beta\eta}{2}(B_{\nabla}-1).
  \label{eq:app_aux_faster_condition}
\end{equation}
\end{proposition}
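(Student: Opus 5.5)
The plan is a one-step descent argument that plugs the $\beta$-smoothness upper bound into the PL inequality, with the alignment and norm-control assumptions used to handle the cross term and the quadratic term. Write $d=g_y+g_a$ and $\theta^+=\theta-\eta d$. By assumption (i), the descent lemma gives
\begin{equation*}
\mathcal{L}_{\mathrm{cls}}(\theta^+)\le \mathcal{L}_{\mathrm{cls}}(\theta)-\eta\langle g_y,d\rangle+\frac{\beta\eta^2}{2}\|d\|^2 .
\end{equation*}

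\textbf{Bounding the two terms.} For the linear term, expand $\langle g_y,d\rangle=\|g_y\|^2+\langle g_y,g_a\rangle$. Assumption (iii) then gives $\langle g_y,d\rangle\ge(1+\rho)\|g_y\|^2$. For the quadratic term, assumption (iv) gives $\|d\|^2\le B_\nabla\|g_y\|^2$. Combining the two, and subtracting $\mathcal{L}_{\mathrm{cls}}^\star$ from both sides,
\begin{equation*}
\mathcal{L}_{\mathrm{cls}}(\theta^+)-\mathcal{L}_{\mathrm{cls}}^\star\le \mathcal{L}_{\mathrm{cls}}(\theta)-\mathcal{L}_{\mathrm{cls}}^\star-\eta\Bigl(1+\rho-\frac{\beta\eta B_\nabla}{2}\Bigr)\|g_y\|^2 .
\end{equation*}

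\textbf{Applying PL.} By assumption (v) the coefficient $\eta(1+\rho-\beta\eta B_\nabla/2)$ is positive. So it is legitimate to lower-bound $\|g_y\|^2$ using the PL inequality \eqref{eq:app_pl_condition}; this positivity is the one place where care is needed, since otherwise the inequality would flip. The substitution yields exactly the factor $1-2\eta\mu(1+\rho-\beta\eta B_\nabla/2)$ in \eqref{eq:aux_contraction_bound}.

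\textbf{Comparison with the task-only update.} The task-only update $\theta-\eta g_y$ is the special case $g_a=0$. There one may take $\rho=0$ and $B_\nabla=1$, which gives the factor $1-2\eta\mu(1-\beta\eta/2)$. The joint factor is strictly smaller exactly when
\begin{equation*}
1+\rho-\frac{\beta\eta B_\nabla}{2}>1-\frac{\beta\eta}{2},
\end{equation*}
which rearranges to \eqref{eq:app_aux_faster_condition}. The comparison is between the two upper bounds, not between the actual losses, and I will state it that way.

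I expect no real obstacle, since every step is an algebraic consequence of the listed assumptions. The only subtlety is the sign condition (v), which is needed before PL can be applied.
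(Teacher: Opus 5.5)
Your proposal is correct and follows the paper's proof essentially step for step: the $\beta$-smoothness descent inequality, assumptions (iii) and (iv) to bound the cross and quadratic terms, the PL condition to convert $\|g_y\|^2$ into the suboptimality gap, and the $g_a=0$ specialization ($\rho=0$, $B_\nabla=1$) for the task-only coefficient. Two points you make explicit are left implicit in the paper and are accurate: assumption (v) is exactly what permits the PL substitution, and the ``sharper'' claim compares the two upper bounds rather than the actual losses.
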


\begin{proof}
By $\beta$-smoothness of $\mathcal{L}_{\mathrm{cls}}$,
\begin{align*}
  \mathcal{L}_{\mathrm{cls}}(\theta^+)
  &\le
  \mathcal{L}_{\mathrm{cls}}(\theta)
  -
  \eta\langle g_y,g_y+g_a\rangle
  +
  \frac{\beta\eta^2}{2}\|g_y+g_a\|^2 \\
  &\le
  \mathcal{L}_{\mathrm{cls}}(\theta)
  -
  \eta(1+\rho)\|g_y\|^2
  +
  \frac{\beta\eta^2B_{\nabla}}{2}\|g_y\|^2 \\
  &=
  \mathcal{L}_{\mathrm{cls}}(\theta)
  -
  \eta
  \left(
  1+\rho-
  \frac{\beta\eta B_{\nabla}}{2}
  \right)
  \|g_y\|^2 .
\end{align*}
Applying the PL condition in Eq.~\eqref{eq:app_pl_condition} gives
Eq.~\eqref{eq:aux_contraction_bound}.

For the task-only update $\theta_{\mathrm{task}}^+=\theta-\eta g_y$, the same
argument gives the contraction coefficient
\begin{equation*}
  1-2\eta\mu\left(1-\frac{\beta\eta}{2}\right).
\end{equation*}
The auxiliary update is sharper when
\begin{equation*}
  1+\rho-
  \frac{\beta\eta B_{\nabla}}{2}
  >
  1-
  \frac{\beta\eta}{2},
\end{equation*}
which is equivalent to Eq.~\eqref{eq:app_aux_faster_condition}.
\end{proof}

\vspace{-0.4em}
\section{Broader Impacts}
\label{app:broader_impacts}
\vspace{-0.4em}
The deployment of multimodal AI in computational pathology carries significant societal and clinical implications. 

\textbf{Positive Impacts.} 
\ours{} directly addresses the critical need for transparency in clinical AI. By explicitly mapping heterogeneous inputs (e.g., whole-slide images, pathology reports, and molecular biomarkers) to understandable morphology and biomarker concepts, \ours{} aligns model decisions with established pathological reasoning. This interpretable design helps bridge the trust gap between black-box deep learning models and medical practitioners. In complex diagnostic domains where decisions are highly reliant on multimodal synthesis, \ours{} offers a scalable, transparent assistive tool that can help alleviate the workload of clinicians and reduce diagnostic variability.

\textbf{Potential Risks.} 
Despite its transparent design, \ours{} is not immune to the risks inherent in medical AI. First, there is a risk of \textit{automation bias}, where clinicians might over-rely on the extracted concepts or the model's final predictions without independent verification, particularly if the model presents its rationale confidently. Second, as with many data-driven models, \ours{} is susceptible to \textit{representation bias}. If the institutional cohorts used for training lack demographic or geographic diversity, the learned concept representations may not generalize well to underrepresented populations, potentially exacerbating existing healthcare disparities.

\textbf{Mitigations and Safe Deployment.} 
To mitigate these risks, we emphasize that \ours{} is designed strictly as an \textit{assistive clinical decision support tool}, not an autonomous diagnostic agent. The concept-level attributions are intended to surface relevant evidence for human review, keeping the clinician firmly in the loop. Furthermore, before any clinical deployment, \ours{} must undergo rigorous, multi-center prospective validation to ensure robustness across diverse patient demographics, varying tissue preparation protocols, and different hardware scanners. We encourage future work to explicitly evaluate the fairness and calibration of concept-guided models across diverse sub-populations.